\newcolumntype{L}[1]{>{\minwd l{#1}}l<{\endminwd}}
\newcolumntype{C}[1]{>{\minwd c{#1}}c<{\endminwd}}
\newcolumntype{R}[1]{>{\minwd r{#1}}r<{\endminwd}}
\newif\ifcomments
  \newcommand{\colornote}[3]{{\color{#1}\bf{#2: #3}\normalfont}}
  \newcommand{\colornote}[3]{}
\newcommand{\given}{\,|\,}
\newcommand{\R}{\mathbb{R}}
\renewcommand{\S}{\mathcal{S}}
\newcommand{\X}{\mathcal{X}}
\renewcommand{\L}{\mathcal{L}}
\newcommand{\A}{\mathcal{A}}
\newcommand{\G}{\mathcal{G}}
\newcommand{\parents}{\textrm{Pa}}
\newcommand{\lone}{\ell_1}
\newcommand\mydots{\ifmmode\ldots\else\makebox[1em][c]{.\hfil.\hfil.}\thinspace\fi}
 \newcommand{\ad}[1]{\textsc{#1}}
\newcommand{\env}[1]{\texttt{#1}}
\newcommand{\lowerl}{{\Large \textbf{$\lrcorner$} }}
\newtheorem{theorem}{Theorem}
\newtheorem{lemma}{Lemma}
\newtheorem{corollary}{Corollary}
\newtheorem{remark}{Remark}[section]
\newenvironment{appdxTheorem}[1]
  {\count@\c@theorem
   \global\c@theorem#1 %
    \global\advance\c@theorem\m@ne
   \theorem}
  {\endproposition
   \global\c@theorem\count@}
\newenvironment{appdxCorollary}[1]
  {\count@\c@corollary
   \global\c@corollary#1 %
    \global\advance\c@corollary\m@ne
   \corollary}
  {\endproposition
   \global\c@corollary\count@}
\newcommand{\methodName}{\textsc{MoCoDA}\xspace}
\newcommand{\methodNameFull}{Model-based Counterfactual Data Augmentation\xspace}
\title{
\methodName: \methodNameFull
}
\author{
Silviu Pitis\thanks{Correspondence to \texttt{spitis@cs.toronto.edu}}$^{\ \ 1}$ \ Elliot Creager$^{1}$ \ Ajay Mandlekar$^2$ \ Animesh Garg$^{1,2}$\\
$^1$University of Toronto and Vector Institute, $^2$NVIDIA
}
\begin{document}

\maketitle

\begin{abstract}
	The number of states in a dynamic process is exponential in the number of objects, making  reinforcement learning (RL) difficult in complex, multi-object domains. 
	For agents to scale to the real world, they will need to react to and reason about unseen combinations of objects. We argue that the ability to recognize and use local factorization in transition dynamics is a key element in unlocking the power of multi-object reasoning.
	To this end, we	show that (1) known local structure in the environment transitions is sufficient for an exponential reduction in the sample complexity of training a dynamics model, and (2) a locally factored dynamics model provably generalizes out-of-distribution to unseen states and actions.
	Knowing the local structure also allows us to predict \textit{which} unseen states and actions this dynamics model will generalize to.
	We propose to leverage these observations in a novel \methodNameFull (\methodName) framework. \methodName applies a learned locally factored dynamics model to an augmented distribution of states and actions to generate counterfactual transitions for RL.
	\methodName works with a broader set of local structures than prior work and allows for direct control over the augmented training distribution.
    \setcounter{footnote}{0} 
	We show that \methodName enables RL agents to learn policies that generalize to unseen states and actions. We use \methodName to train an offline RL agent to solve an out-of-distribution robotics manipulation task on which standard offline RL algorithms fail.\footnote{Visualizations \& code available at \url{https://sites.google.com/view/mocoda-neurips-22/}}
\end{abstract}

\section{Introduction}

Modern reinforcement learning (RL) algorithms have demonstrated remarkable success in several different domains such as games~\cite{mnih2015human, silver2017mastering} and robotic manipulation~\cite{kalashnikov2018scalable, andrychowicz2020learning}. By repeatedly attempting a single task through trial-and-error, these algorithms can learn to collect useful experience and eventually solve the task of interest. 
However, designing agents that can generalize in \textit{off-task} and \textit{multi-task} settings remains an open and challenging research question. 
This is especially true in the offline and zero-shot settings, in which the training data might be unrelated to the target task, and may lack sufficient coverage over possible states.

One way to enable generalization in such cases is through structured representations of states, transition dynamics, or task spaces. These representations can be directly learned, sourced from known or learned abstractions over the state space, or derived from causal knowledge of the world. Symmetries present in such representations enable compositional generalization to new configurations of states or tasks, either by building the structure into the function approximator or algorithm \citep{kipf2019contrastive,veerapaneni2020entity,goyal2019recurrent,nangue2020boolean}, or by using the structure for data augmentation \citep{andrychowicz2017hindsight,laskin2020reinforcement,pitis2020counterfactual}.

In this paper, we extend past work on structure-driven data augmentation by using a locally factored model of the transition dynamics to generate counterfactual training distributions.
This enables agents to generalize beyond the support of their original training distribution, including to novel tasks where learning the optimal policy requires access to states never seen in the experience buffer.
Our key insight is that a learned dynamics model that accurately captures local causal structure (a ``locally factored'' dynamics model) will predictably exhibit good generalization performance outside the empirical training distribution. We propose \methodNameFull (\methodName), which generates an augmented state-action distribution  where its locally factored dynamics model is likely to perform well, then applies its dynamics model to generate new transition data. By training the agent's policy and value modules on this augmented dataset, they too learn to generalize well out-of-distribution. 
To ground this in an example, we consider how a US driver might use \methodName to adapt to driving on the left side of the road while on vacation in the UK (Figure \ref{fig_ad}). Given knowledge of the target task, we can even focus the augmented distribution on relevant areas of the state-action space (e.g., states with the car on the left side of the road). 

Our main contributions are:
\renewcommand{\theenumi}{\Alph{enumi}}
\begin{enumerate}[leftmargin=0.5cm]
    \item Our proposed method, \methodName, leverages a masked dynamics model for data-augmentation in locally-factored settings, which relaxes strong assumptions made by prior work on factored MDPs and counterfactual data augmentation.
    \item \methodName allows for direct control of the state-action distribution on which the agent trains; we show that controlling this distribution in a task relevant way can lead to improved performance.
	\item We demonstrate ``zero-shot'' generalization of a policy trained with \methodName to states that the agent has never seen. With \methodName, we train an offline RL agent to solve an out-of-distribution robotics manipulation task on which standard offline RL algorithms fail.
\end{enumerate}

\definecolor{oneorange}{RGB}{230,159,66}
\definecolor{twored}{RGB}{194,40,27}
\definecolor{threeblue}{RGB}{26,70,140}
\definecolor{fourgreen}{RGB}{143,194,78}

\begin{figure}[!t]
	\centering
    	\includegraphics[width=.9\textwidth]{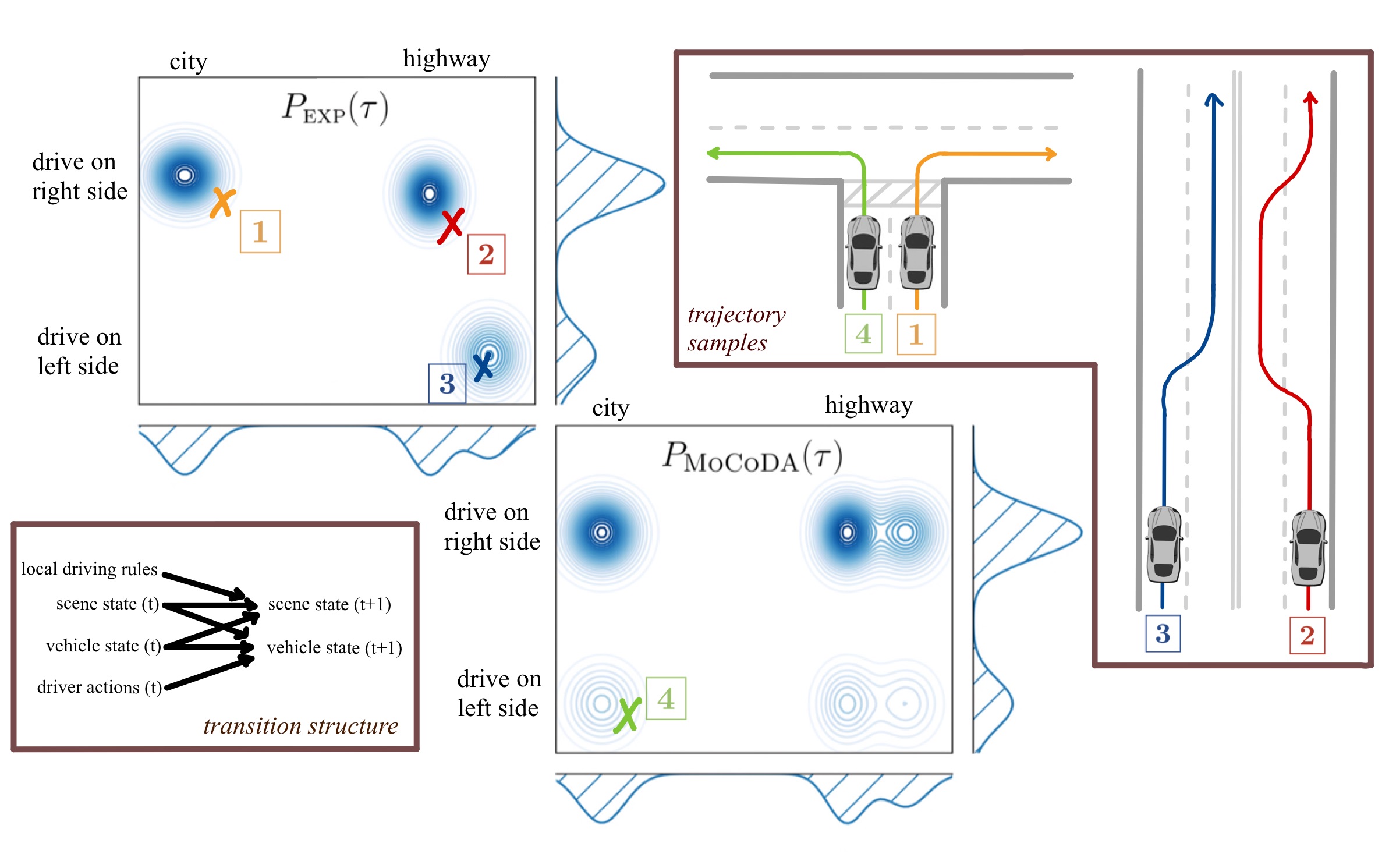}
	\caption{
	\textbf{Out-of-Distribution Generalization using \methodName}: A US driver can use \methodName to quickly adapt to driving in the left lane during a UK trip.	Their prior experience $P_{\ad{Emp}}(\tau)$ (\textbf{top left}) contains mostly right-driving experience (e.g. {\tiny \color{oneorange} \boxed{\textbf 1}}, {\tiny \color{twored} \boxed{\textbf 2}}) 
	and a limited amount of left-driving experience after renting the car in the UK (e.g. {\tiny \color{threeblue} \boxed{\textbf 3}}).
	A locally factored model that captures the transition structure (\textbf{bottom left}) allows the agent to accurately sample counterfactual experience from $P_{\ad{Mocoda}}(\tau)$ (\textbf{bottom center}), including novel left-lane city driving maneuvers (e.g. {\tiny \color{fourgreen} \boxed{\textbf 4}}).  This enables fast adaptation when learning an optimal policy for the new task (UK driving).
	Our framework \methodName draws single-step transition samples from $P_{\ad{Mocoda}}(\tau)$ given $P_{\ad{Emp}}(\tau)$ and knowledge of the causal structure; several realizations of this framework are described in Section \ref{sec:method}.
	}
	\label{fig_ad}
\end{figure}

\section{Preliminaries}

\subsection{Background}

We model the environment as an infinite-horizon, reward-free Markov Decision Process (MDP), described by tuple $\langle \S, \A, P, \gamma \rangle$ consisting of the state space, action space, transition function, and discount factor, respectively \citep{puterman2014markov,sutton2018reinforcement}. 
We use lowercase for generic instances and uppercase for variables (e.g., $s \in \textrm{range}(S) \subseteq \S$, though we also abuse notation and write $S \in \S$). 
A \textit{task} is defined as a tuple $\langle r, P_0 \rangle$, where $r: \S \times \A \to \R$ is a reward function and $P_0$ is an initial distribution over $S$.
The goal of the agent given a task is to learn a policy $\pi: \S \to \A$ that maximizes value $\mathbb{E}_{P,\pi}\sum_t\gamma^tr(s_t, a_t)$. 
Model-based RL is one approach to solving this problem, in which the agent learns a model $P_\theta$ of the transition dynamics $P$. The model is ``rolled out'' to generate ``imagined'' trajectories, which are used either for direct planning \citep{de2005tutorial,chaslot2008monte}, or as training data for the agent's policy and value functions \citep{sutton1991dyna,janner2019trust}.

\textbf{Factored MDPs}.\ \ A factored MDP (FMDP) is a type of MDP that assumes a globally factored transition model, which can be used to exponentially improve the sample complexity of RL \citep{guestrin2003efficient,kearns1999efficient,osband2014near}. In an FMDP, states and actions are described by a set of variables $\{X^i\}$, so that $\S\!\times\!\A = \X^1\!\times\!\X^2\!\times\!\dots\!\times\!\X^n$, and each state variable $X^i \in \X^i \ (\X^i\ \textrm{is a subspace of}\ \S)$ is dependent on a subset of state-action variables (its ``parents'' $\parents(X^i)$) at the prior timestep, $X^i \sim P_i(\parents(X^i))$. We call a set $\{X^j\}$ of state-action variables a ``parent set'' if there exists a state variable $X^i$ such that $\{X^j\} = \parents(X^i)$. We say that $X^i$ is a ``child'' of its parent set $\parents(X^i)$. We refer to the tuple $\langle X^i, \parents(X^i), P_i(\cdot) \rangle$ as a ``causal mechanism''. 

\textbf{Local Causal Models}.\ \  
Because the strict global factorization assumed by FMDPs is rare,
recent work on data augmentation for RL and object-oriented RL suggests that transition dynamics might be better understood in a local sense, where all objects may interact with each other over time, but in a locally sparse manner \citep{goyal2019recurrent,kipf2019contrastive,madan2021fast}. 
Our work uses an abridged version of the Local Causal Model (LCM) framework \cite{pitis2020counterfactual}, as follows:
We assume the state-action space decomposes into a disjoint union of local neighborhoods: $\S\!\times\!\A\ = \L_1 \sqcup \L_2 \sqcup \dots \sqcup \L_n$. A neighborhood $\L$ is associated with its own transition function $P^{\L}$, which is factored according to its graphical model $\G^\L$ \citep{koller2009probabilistic}. We assume no two graphical models share the same structure\footnote{
This assumption is a matter of convenience that makes counting local subspaces in Section \ref{sec:theory} slightly easier and simplifies our implementation of the locally factored dynamics model in Section \ref{sec:method}. To accommodate cases where subspaces with different dynamics share the same causal structure, one could identify local subspaces using a latent variable rather than the mask itself, which we leave for future work.
} (i.e., the structure of $\G^\L$ uniquely identifies $\L$). Then, analogously to FMDPs, if $(s_t, a_t) \in \L$, each state variable $X^i_{t+1}$ at the next time step is dependent on its parents $\parents^\L(X^i_{t+1})$ at the prior timestep, $X^i_{t+1} \sim P_i^\L(\parents^\L(X^i_{t+1}))$. We define mask function $M: \S \times \A \to \{\L_i\}$ that maps $(s, a) \in \L$ to the adjacency matrix of $\G^\L$. This formalism is summarized in Figure \ref{fig_formalism}, and differs from FMDPs in that each $\L$ has its own factorization.

\begin{figure}[!t]
	\centering
	\includegraphics[width=\textwidth]{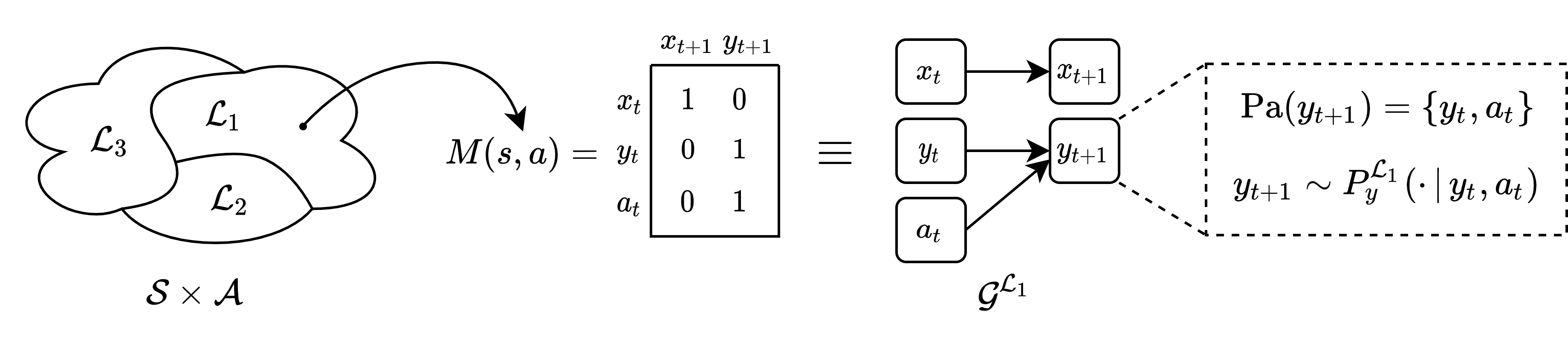}
	\vspace{-0.5cm}
	\caption{\textbf{Locally Factored Dynamics:} The state-action space $\S\times\A$ is divided into local subsets, $\L_1, \L_2, \L_3$, which each have their own factored causal structure, $\G^\L$. The local transition model $P^\L$ is factored according to $\G^\L$; e.g., in the example shown, $P^\L(x_t, y_t, a_t) = [P_x(x_t), P_y(y_t, a_t)]$.}
	\label{fig_formalism}
\end{figure}

Given knowledge of $M$, the Counterfactual Data Augmentation (CoDA) framework \cite{pitis2020counterfactual} allowed agents to stitch together empirical samples from disconnected causal mechanisms to derive novel transitions.
It did this by swapping compatible components between the observed transitions to create new ones, arguing that this procedure can generate exponentially more data samples as the number of disconnected causal components grows. CoDA was shown to significantly improve sample complexity in several settings, including the offline RL setting and a goal-conditioned robotics control setting. 
Because CoDA relied on empirical samples of the causal mechanisms to generate data in a model-free fashion, however, it required that the causal mechanisms be completely disentangled.
The proposed \methodName \textit{leverages a dynamics model to improve upon model-free CoDA} in several respects: (a) by using a learned dynamics model, \methodName works with overlapping parent sets, (b) by modeling the parent distribution, \methodName allows the agent to control the overall data distribution, (c) \methodName demonstrates zero-shot generalization to new areas of the state space, allowing the agent to solve tasks that are entirely outside the original data distribution.

\subsection{Related Work}

\textbf{RL with Structured Dynamics}.\ \  
A growing literature recognizes the advantages that structure can provide in RL, including both improved sample efficiency \cite{loynd2020working,balaji2020factoredrl,huang2022action} and generalization performance \cite{zhou2022policy,wang2018nervenet,sodhani2022improving}. Some of these works involve sparse interactions whose structure changes over time \cite{goyal2019recurrent,kipf2019contrastive}, which is similar to and inspires the locally factored setup assumed by this paper. Most existing work focuses on leveraging structure to improve the architecture and generalization capability of the function approximator \cite{zhou2022policy}. Although \methodName also uses the structure for purposes of improving the dynamics model, our proposed method is among the few existing works that also use the structure for data augmentation \cite{lu2020sample,mandlekar2020learning,pitis2020counterfactual}.

Several past and concurrent works aim to tackle unsupervised object detection \cite{locatello2020object,dittadi2021generalization} (i.e., learning an entity-oriented representation of states, which is a prerequisite for learning the dynamics factorization) and learning the dynamics factorization \cite{kipf2018neural,wang2022causal}. These are both open problems that run orthogonal to \methodName. We expect that as solutions for unsupervised object detection and factored dynamics discovery improve, \methodName will find broader applicability. 

\textbf{RL with Causal Dynamics}.\ \ 
Adopting this formalism allows one to cast several important problems within RL as questions of causal inference, such as off-policy evaluation \citep{buesing2018woulda,oberst2019counterfactual}, learning baselines for model-free RL \citep{mesnard2021counterfactual}, and policy transfer \citep{killian2022counterfactually}.
\citet{lu2020sample} applied SCM dynamics to data augmentation in continuous sample spaces, and discussed the conditions under which the generated transitions are uniquely identifiable counterfactual samples.
This approach models state and action variables as unstructured vectors, emphasizing benefit in modeling action interventions for settings such as clinical healthcare where exploratory policies cannot be directly deployed.
We take a complementary approach by modeling structure \emph{within} state and action variables, and our augmentation scheme involves sampling entire causal mechanisms (over multiple state or action dimensions) rather than action vectors only.
See Appendix \ref{sec:causal-appendix} for a more detailed discussion of how \methodName sampling relates to causal inference and counterfactual reasoning.

\section{Generalization Properties of Locally Factored Models}\label{sec:theory}

\subsection{Sample Complexity of Training a Locally Factored Dynamics Model}

In this subsection, we provide an original adaptation of an elementary result from model-based RL to the \textit{locally} factored setting, to show that factorization can exponentially improve sample complexity. We note that several theoretical works have shown that the FMDP structure can be exploited to obtain similarly strong sample complexity bounds in the FMDP setting. Our goal here is not to improve upon these results, but to adapt a small part (model-based generalization) to the significantly more general locally factored setting and show that local factorization is enough for (1) \textit{exponential gains in sample complexity} and (2) \textit{out-of-distribution generalization} with respect to the empirical joint, to a set of states and actions that may be exponentially larger than the empirical set. Note that the following discussion applies to tabular RL, but we apply our method to continuous domains. 

\textbf{Notation}.\ \ We work with finite state and action spaces ($|\S|, |\A| < \infty$) and assume that there are $m$ local subspaces $\L$ of size $|\L|$, such that $m|\L| = |\S||\A|$. For each subspace $\L$, we assume transitions factor into $k$ causal mechanisms $\{P_i\}$, each with the same number of possible children, $|c_i|$, and the same number of possible parents, $|\parents_i|$. Note $m\Pi_i|c_i| = |\S|$  (child sets are mutually exclusive) but $m\Pi_i|\parents_i| \geq |\S||\A|$ (parent sets may overlap).

\begin{theorem}\label{theorem_one}
Let $n$ be the number of empirical samples used to train the model of each local causal mechanism, $P_{i, \theta}^\L$ at each configuration of parents $\parents_i = x$. There exists constant $c$ such that, if
$$
n \geq \frac{ck^2|c_i|\log(|\S||\A|/\delta)}{\epsilon^2},
$$
then, with probability at least $1-\delta$, we have:
$$\max_{(s, a)} \Vert P(s, a) - P_{\theta}(s, a) \Vert_1 \leq  \epsilon.$$
\end{theorem}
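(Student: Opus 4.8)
The plan is to combine a per-mechanism concentration bound with the factored structure of the transition model and then control everything with a single union bound. The key structural fact to exploit is that each $(s,a)$ lies in exactly one neighborhood $\L = M(s,a)$ and fixes the parent configuration of every mechanism, so the next-state distribution factorizes exactly as $P(s,a) = \prod_{i=1}^k P_i^\L(\parents_i)$, and likewise $P_\theta(s,a) = \prod_{i=1}^k P_{i,\theta}^\L(\parents_i)$, where each factor is the conditional law of the (mutually exclusive) child block $c_i$ at the parent value determined by $(s,a)$. The first ingredient I would establish is subadditivity of $L_1$ error across a product factorization: via a telescoping/hybrid argument that swaps one factor at a time and uses that the untouched factors are probability distributions summing to one, one obtains
$$\|P(s,a) - P_\theta(s,a)\|_1 \le \sum_{i=1}^k \big\|P_i^\L(\parents_i) - P_{i,\theta}^\L(\parents_i)\big\|_1 .$$
It therefore suffices to force each of the $k$ per-mechanism errors to be at most $\epsilon/k$.

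The second ingredient is a finite-support $L_1$ deviation bound for a single estimated mechanism. For a fixed triple $(\L, i, x)$ of neighborhood, mechanism index, and parent value $x$, the estimate $P_{i,\theta}^\L(\cdot \mid \parents_i = x)$ is just the empirical distribution of $n$ i.i.d.\ samples over a support of size $|c_i|$, so a standard Weissman-type inequality gives $\Pr\!\big[\|P_{i,\theta}^\L - P_i^\L\|_1 \ge \epsilon/k\big] \le 2^{|c_i|}\exp\!\big(-n\epsilon^2/(2k^2)\big)$. The tolerance $\epsilon/k$ is exactly what produces the $k^2$ in the sample-complexity threshold, while the support size $|c_i|$ enters through the combinatorial prefactor $2^{|c_i|}$.

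Third, I would union-bound over every conditional distribution that could ever be activated, i.e.\ over all triples $(\L, i, x)$. Their number is $N = m\sum_{i=1}^k |\parents_i|$, which is polynomial in $|\S||\A|$ since $m \le |\S||\A|$, each $|\parents_i| \le |\S||\A|$, and $k$ is small; hence $\log N = O(\log(|\S||\A|))$. Requiring the total failure probability $N\,2^{|c_i|}\exp(-n\epsilon^2/(2k^2))$ to be at most $\delta$ and solving for $n$ yields
$$n \ge \frac{2k^2\big(|c_i|\log 2 + \log N + \log(1/\delta)\big)}{\epsilon^2},$$
and absorbing $|c_i|\log 2 + \log(N/\delta) \le c'\,|c_i|\log(|\S||\A|/\delta)$ into a single constant $c$ recovers the stated bound. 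On the resulting good event every per-mechanism error is below $\epsilon/k$, so by the subadditivity above $\max_{(s,a)}\|P(s,a) - P_\theta(s,a)\|_1 \le k\cdot(\epsilon/k) = \epsilon$.

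The step I expect to be the main obstacle is the union-bound bookkeeping, specifically ensuring that the count $N$ collapses to something genuinely logarithmic in $|\S||\A|$ rather than in the exponentially larger number of global next-states, since that collapse is precisely where local factorization buys the exponential improvement. Overlapping parent sets make $m\prod_i|\parents_i|$ exceed $|\S||\A|$, so I would need to argue carefully that counting parent \emph{configurations per mechanism} (giving the additive $m\sum_i|\parents_i|$) rather than joint global configurations keeps $\log N$ on the order of $\log(|\S||\A|)$, and to verify that the product-measure error bound is legitimately applied pointwise at each fixed $(s,a)$, where the factorization is exact and the child blocks $c_i$ are disjoint.
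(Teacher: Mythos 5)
Your proposal is correct and follows essentially the same route as the paper's proof: a Weissman-type $\ell_1$ concentration bound for each mechanism at each fixed parent configuration, the telescoping subadditivity $\Vert \prod_i p_i - \prod_i q_i\Vert_1 \le \sum_i \Vert p_i - q_i\Vert_1$ to pass from per-mechanism error $\epsilon/k$ to error $\epsilon$ at a fixed $(s,a)$, and a union bound whose logarithm is $O(\log(|\S||\A|/\delta))$. The only cosmetic difference is that you union-bound over parent configurations per mechanism where the paper phrases it as a union bound over states and actions; both counts are polynomial in $|\S||\A|$ and yield the same bound.
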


\begin{proof}[Sketch of Proof]
We apply a concentration inequality to bound the $\ell_1$ error for fixed parents and extend this to a bound on the $\ell_1$ error for a fixed $(s, a)$ pair. The conclusion follows by a union bound across all states and actions. See Appendix \ref{appdx_proposition} for details. 
\end{proof}

To compare to full-state dynamics modeling, we can translate the sample complexity from the per-parent count $n$ to a total count $N$. Recall $m\Pi_i|c_i| = |\S|$, so that $|c_i| = (|\S|/m)^{1/k}$, and $m\Pi_i|\parents_i| \geq |\S||\A|$. We assume a small constant overlap factor $v \geq 1$, so that $|\parents_i| = v(|\S||\A|/m)^{1/k}$. We need the total number of component visits to be $n|\parents_i|km$, for a total of $nv(|\S||\A|/m)^{1/k}m$ state-action visits, assuming that parent set visits are allocated evenly, and noting that each state-action visit provides $k$ parent set visits. This gives:

\begin{corollary}\label{corollary_one}
To bound the error as above, we need to have
$$N \geq \frac{cmk^2(|\S|^2|\A|/m^2)^{1/k}\log(|\S||\A|/\delta)}{\epsilon^2},$$
total train samples, where we have absorbed the overlap factor $v$ into constant $c$.
\end{corollary}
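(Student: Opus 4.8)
The plan is to treat this purely as a change of accounting units: I would convert Theorem~\ref{theorem_one}'s per-parent-configuration sample count $n$ into a total number of environment interactions $N$. Nothing new needs to be proved probabilistically, so I would invoke the conclusion of Theorem~\ref{theorem_one} as a black box and do only combinatorial bookkeeping on top of it.

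First I would tally how many distinct (mechanism, parent-configuration) pairs must be fit. Each of the $m$ local subspaces $\L$ factors into $k$ mechanisms $P_i^\L$, and each mechanism must be estimated at each of its $|\parents_i|$ parent configurations, requiring $n$ samples apiece by the theorem. Summing over configurations, mechanisms, and subspaces gives $n\,|\parents_i|\,k\,m$ total ``component visits'' (visits to a specific parent configuration of a specific mechanism), under the simplifying assumption that samples are allocated evenly across configurations. The key step — the one that carries the factorization payoff — is the conversion from component visits to state-action visits: a single observed transition from $(s,a)\in\L$ simultaneously reveals all $k$ child variables given their parents, so it supplies one fresh sample for each of the $k$ active mechanisms at once. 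Hence each state-action visit yields $k$ parent-set visits, and $N = n\,|\parents_i|\,k\,m / k = n\,m\,|\parents_i|$.

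It then remains to express $n$ and $|\parents_i|$ in terms of the global quantities. Under the stated uniformity assumptions, $m\prod_i|c_i| = |\S|$ gives $|c_i| = (|\S|/m)^{1/k}$, and $m\prod_i|\parents_i| \geq |\S||\A|$ gives $|\parents_i| = v(|\S||\A|/m)^{1/k}$ with overlap factor $v\geq 1$. Substituting the Theorem~\ref{theorem_one} expression for $n$ (using $|c_i|$ as above) into $N = n\,m\,|\parents_i|$ and merging the two $1/k$-th powers into $(|\S|^2|\A|/m^2)^{1/k}$ yields the claimed bound, once $v$ is folded into the constant $c$.

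I would flag the ``divide by $k$'' conversion as the main conceptual obstacle rather than a computational one: it silently encodes that observing one transition informs every mechanism simultaneously, which is precisely where local factorization buys its savings, and it leans on the even-allocation idealization. The parent-set overlap tracked by $v\geq 1$ is the other point to watch, since overlapping parents mean $m\prod_i|\parents_i|$ strictly overcounts $|\S||\A|$; absorbing $v$ into $c$ is legitimate only under the excerpt's assumption that the overlap factor is a small constant.
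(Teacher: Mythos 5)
Your proposal is correct and follows the paper's own derivation essentially step for step: counting $n\,|\parents_i|\,k\,m$ component visits, dividing by $k$ because each state-action visit supplies one sample to all $k$ mechanisms, substituting $|c_i| = (|\S|/m)^{1/k}$ and $|\parents_i| = v(|\S||\A|/m)^{1/k}$, and absorbing $v$ into $c$. Your added commentary on the even-allocation idealization and the role of the overlap factor is accurate but does not change the argument.
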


Comparing this to the analogous bound for full-state model learning (\citet{agarwal2019reinforcement}, Prop. 2.1): 

$$N \geq \frac{c|\S|^2|\A|\log(|\S||\A|/\delta)}{\epsilon^2},$$

we see that we have gone from super-linear $O(|\S|^2|\A|\log(|\S||\A|))$ sample complexity in terms of $|S||A|$, to the exponentially smaller $O(mk^2(|\S|^2|\A|/m^2)^{1/k}\log(|\S||\A|))$. 

This result implies that \textit{for large enough $|\S||\A|$ our model \textit{must} generalize to unseen states and actions}, since the number of samples needed ($N$) is exponentially smaller than the size of the state-action space ($|\S||\A|$). In contrast, if it did not, then sample complexity would be $\Omega(|\S||\A|)$. 

\begin{remark}
The global factorization property of FMDPs is a strict assumption that rarely holds in reality. Although local factorization is broadly applicable and significantly more realistic than the FMDP setting, it is not without cost. In FMDPs, we have a single subspace ($m=1$). In the locally factored case, the number of subspaces $m$ is likely to grow exponentially with the number of factors $k$, as there are exponentially many ways that $k$ factors can interact. To be more precise, there are $k2^k$ possible bipartite graphs from $k$ nodes to $k$ nodes. Nevertheless, by comparing bases ($2 \ll |\S||\A|$), we see that we still obtain exponential gains in sample complexity from the locally factored approach.
\end{remark}

\subsection{Training Value Functions and Policies for Out-of-Distribution Generalization}

In the previous subsection, we saw that a locally factored dynamics model provably generalizes outside of the empirical joint distribution. A natural question is whether such \textit{local factorization can be leveraged to obtain similar results for value functions and policies}?

We will show that the answer is \textit{yes}, but perhaps counter-intuitively, it is not achieved by directly training the value function and policy on the empirical distribution, as is the case for the dynamics model. The difference arises because learned value functions, and consequently learned policies, involve the long horizon prediction $\mathbb{E}_{P,\pi}\sum_{t=0}^{\infty}\gamma^tr(s_t, a_t)$, which may not benefit from the local sparsity of $\G^\L$. When compounded over time, sparse local structures can quickly produce an entangled long horizon structure (cf. the ``butterfly effect''). 
Intuitively, even if several pool balls are far apart and locally disentangled, future collisions are central to planning and the optimal policy depends on the relative positions of all balls. This applies even if rewards are factored (e.g., rewards in most pool variants) \citep{sodhani2022improving}. 

We note that, although temporal entanglement may be exponential in the branching factor of the unrolled causal graph, it's possible for the long horizon structure to stay sparse (e.g., $k$ independent factors that never interact, or long-horizon disentanglement between descision relevant and decision irrelevant variables \cite{huang2022action}). It's also possible that other regularities in the data will allow for good out-of-distribution generalization. Thus, we cannot claim that value functions and policies will never generalize well out-of-distribution (see \citet{veerapaneni2020entity} for an example when they do). Nevertheless, we hypothesize that exponentially fast entanglement does occur in complex natural systems, making direct generalization of long horizon predictions difficult. 

Out-of-distribution generalization of the policy and value function can be achieved, however, by leveraging the generalization properties of a locally factored dynamics model. We propose to do this by generating out-of-distribution states and actions (the ``parent distribution''), and then applying our learned dynamics model to generate transitions that are used to train the policy and value function. We call this process \methodNameFull (\methodName). 

\section{
\methodNameFull
}\label{sec:method}

\begin{figure}[!t]
  \centering
  \includegraphics[width=0.72\textwidth]{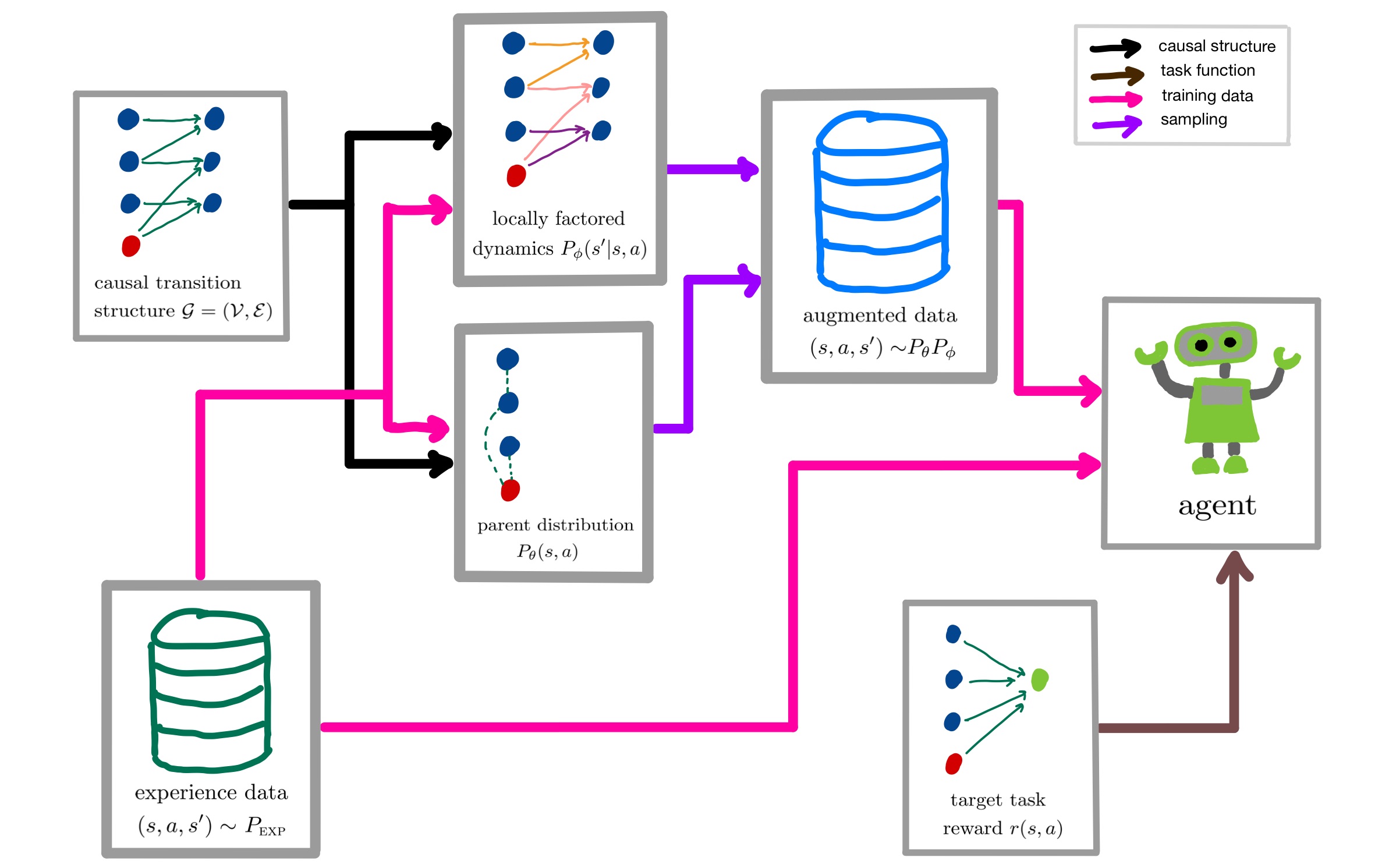}
\caption{\textbf{Training an RL Agent with \methodName}: We use the empirical dataset to train parent distribution model, $P_\theta(s, a)$ and locally factored dynamics model $P_\phi(s'\given s, a)$, both informed by the local structure. The dynamics model is applied to the parent distribution to produce augmented dataset $P_\theta P_\phi$. The augmented \& empirical datasets are labeled with the target task reward, $r(s, a)$ and fed into the RL algorithm as training data.}
    \label{fig:block-diagram}
\vspace{0.3\baselineskip}
\end{figure}

In the previous section, we discussed how locally factored dynamics model can generalize beyond the empirical dataset to provide accurate predictions on an augmented state-action distribution we call the ``parent distribution''.
We now seek to leverage this out-of-distribution generalization in the dynamics model to bootstrap the training of an RL agent.
Our approach is to control the agent's training distribution $P(s, a, s')$ via the locally factored dynamics $P_\phi(s'|s,a)$ and the  parent distribution $P_\theta(s,a)$ (both trained using experience data).
This allows us to sample \emph{augmented} transitions (perhaps unseen in the experience data) for consumption by a downstream RL agent.
We call this framework \methodName, and summarize it using the following three-step process:

\begin{enumerate}
    \item[\textbf{Step 1}\ ] Given known parent sets, generate appropriate parent distribution $P_\theta(s, a)$.
    \item[\textbf{Step 2}\ ] Apply a learned dynamics model $P_\phi(s'|s,a)$ to parent distribution to generate ``augmented dataset'' of transitions $(s, a, s')$.
    \item[\textbf{Step 3}\ ] Use augmented dataset $s,a,s' \sim P_\theta P_\phi$ (alongside experience data, if desired) to train an off-policy RL agent on the (perhaps novel) target task. 
\end{enumerate}

Figure \ref{fig:block-diagram} illustrates this framework in a block diagram. An instance of \methodName is realized by specific choices at each step. For example, the original CoDA method \citep{pitis2020counterfactual} is an instance of \methodName, which (1) generates the parent distribution by uniformly swapping non-overlapping parent sets, and (2) uses subsamples of empirical transitions as a locally factored dynamics model. CoDA works when local graphs have non-overlapping parent sets, but it does not allow for control over the parent distribution and does not work in cases where parent sets overlap. \methodName generalizes CoDA, alleviating these restrictions and allowing for significantly more design choices, discussed next. 

\subsection{Generating the Parent Distribution}\label{subsection_generating_ad}

What parent distribution (Step 1) should be used to generate the augmented dataset? We describe some options below, noting that our proposals (\ad{Mocoda}, \ad{Mocoda-U}, \ad{Mocoda-P}) rely on knowledge of (possibly local) parent sets---i.e., they require the state to be decomposed into objects. 

\textbf{Baseline Distributions.}\ \ \ \ If we restrict ourselves to states and actions in the empirical dataset (\textbf{\ad{Emp}}) or short-horizon rollouts that start in the empirical state-action distribution (\textbf{\ad{Dyna}}), as is typical in Dyna-style approaches \citep{sutton2018reinforcement,janner2019trust}, we limit ourselves to a small neighborhood of the empirical state-action distribution. This forgoes the opportunity to train our off-policy RL agent on out-of-distribution data that may be necessary for learning the target task.

Another option is to sample random state-actions from $\S\times\A$ (\textbf{\ad{Rand}}). While this provides coverage of all state-actions relevant to the target task, there is no guarantee that our locally factorized model generalizes well in \ad{Rand}. The proof of Theorem \ref{theorem_one} shows that our model only generalizes well to a particular $(s, a)$ if each component generalizes well on the configurations of each parent set in that $(s, a)$. In context of Theorem \ref{theorem_one}, this occurs only if the empirical data used to train our model contained at least $n$ samples for each set of parents in $(s, a)$. This suggests focusing on data whose parent sets have sufficient support in the empirical dataset. 

\textbf{The \ad{Mocoda} distribution.}\ \ \ \ We do this by constraining the marginal distribution of each parent set (within local neighborhood $\L$) in the augmented distribution to match the corresponding marginal in the empirical dataset. As there are many such distributions, in absence of additional information, it is sensible to choose the one with maximum entropy \citep{jaynes1957information}. We call this maximum entropy, marginal matching distribution the \textbf{\ad{Mocoda}} augmented distribution. Figure \ref{fig_ad} provides an illustrative example of going from \ad{Emp} (driving primarily on the right side) to \ad{Mocoda} (driving on both right and left). We propose an efficient way to generate the $\ad{Mocoda}$ distribution using a set of Gaussian Mixture Models, one for each parent set distribution. We sample parent sets one at a time, conditioning on any previous partial samples due to overlap between parent sets. This process is detailed in Appendix \ref{impl_details}.

\textbf{Weaknesses of the \ad{Mocoda} distribution.}\ \ \ \ Although our locally factored dynamics model is likely to generalize well on \ad{Mocoda}, there are a few reasons why training our RL agent on $\ad{Mocoda}$ in Step 3 may yield poor results. First, if there are empirical imbalances within parent sets (some parent configurations more common than others), these imbalances will appear in $\ad{Mocoda}$. Moreover, multiple such imbalances will compound exponentially, so that $(s, a)$ tuples with rare parent combinations will be extremely rare in \ad{Mocoda}, even if the model generalizes well to them. 
Second, $\textrm{Support}(\ad{Mocoda})$ may be so large that it makes training the RL algorithm in Step 3 inefficient. Finally, the cost function used in RL algorithms is typically an expectation over the training distribution, and optimizing the agent in irrelevant areas of the state-action space may hurt performance. The above limitations suggest that rebalancing $\ad{Mocoda}$ might improve results.

\textbf{\ad{Mocoda-U} and \ad{Mocoda-P}.}\ \ \ \ To mitigate the first weakness of \ad{Mocoda} we might skew $\ad{Mocoda}$ toward the uniform distribution over its support, $\mathcal{U}(\textrm{Support}(\ad{Mocoda}))$. Although this is possible to implement using rejection sampling when $k$ is small, exponential imbalance makes it impractical when $k$ is large. A more efficient implementation reweights the GMM components used in our \ad{Mocoda} sampler. We call this approach (regardless of implementation) $\textbf{\ad{Mocoda-U}}$. To mitigate the second and third weaknesses of \ad{Mocoda}, we need additional knowledge about the target task---e.g., domain knowledge or expert trajectories. We can use such information to define a prioritized parent distribution \textbf{\ad{Mocoda-P}} with support in \textrm{Support}({\ad{Mocoda}}), which can also be obtained via rejection sampling (perhaps on \ad{Mocoda-U} to also relieve the initial imbalance). 

\subsection{The Choice of Dynamics Model and RL Algorithm}

Once we have a parent distribution, $P_\theta(s, a)$, we generate our augmented dataset by applying dynamics model $P_\phi(s'\given s, a)$. The natural choice in light of the discussion in Section \ref{sec:theory} is a locally factored model. This requires knowledge of the local factorization, which is more involved than the parent set knowledge used to generate the \ad{Mocoda} distribution and its reweighted variants. We note, however, that a locally factored model may not be strictly necessary for \methodName, so long as the underlying dynamics are factored. Although unfactored models do not perform well in our experiments, we hypothesize that a good model with enough in-distribution data and the right regularization might learn to implicitly respect the local factorization. The choice of model architecture is not core to our work, and we leave exploration of this possibility to future work. 

\textbf{Masked Dynamics Model.}\ \ \ \ In our experiments, we assume access to a mask function $M: \S \times \A \to \{0, 1\}^{(|\S|+|\A|)\times |\S|}$ (perhaps learned \citep{kipf2018neural,pitis2020counterfactual}), which maps states and actions to the adjacency map of the local graph $\G^\L$. Given this mask function, we design a dynamics model $P_\phi$ that accepts $M(s, a)$ as an additional input and respects the causal relations in the mask (i.e., mutual information $I(X^i_t; X^j_{t+1} \given (S_t, A_t)\setminus X^i_t) = 0$ if $M(s_t, a_t)_{ij} = 0$). There are many architectures that enforce this constraint. In our experiments we opt for a simple one, which first embeds each of the $k$ parent sets: $f = [f_i(\parents_i)]_{i=1}^k$, and then computes the $j$-th child as a function of the sum of the masked embeddings, $g_j(M(s, a)_{\cdot,j}\cdot f)$. See Appendix \ref{impl_details} for further implementation details.

\textbf{The RL Algorithm.}\ \ \ \ After generating an augmented dataset by applying our dynamics model to the augmented distribution, we label the data with our target task reward and use the result to train an RL agent. \methodName works with a wide range of algorithms, and the choice of algorithm will depend on the task setting. For example, our experiments are done in an offline setup, where the agent is given a buffer of empirical data, with no opportunity to explore. For this reason, it makes sense to use offline RL algorithms, as this setting has proven challenging for standard online algorithms \citep{levine2020offline}. 

\begin{remark}
 The rationales for (1) regularizing the policy toward the empirical distribution in offline RL algorithms, and (2) training on the \ad{Mocoda} distribution, are compatible: in each case, we want to restrict ourselves to state-actions where our models generalize well. By using \ad{Mocoda} we expand this set \textit{beyond} the empirical distribution. Thus, when we apply offline RL algorithms in our experiments, we train their offline component (e.g., the action sampler in \ad{BCQ} \cite{fujimoto2019off} or the BC constraint in \ad{TD3-BC} \cite{fujimoto2021minimalist}) on the expanded \ad{Mocoda} training distribution.
\end{remark}

\section{Experiments}\label{sec:empirical}

\textbf{Hypotheses}\ \ \ \ Our experiments are aimed at finding support for two critical hypotheses:

\begin{enumerate}
    \item[\textbf{H1}\ \ ] Dynamics models, especially ones sensitive to the local factorization, are able to generalize well in the \ad{Mocoda} distribution.
    \item[\textbf{H2}\ \ ] This out-of-distribution generalization can be leveraged via data augmentation to train an RL agent to solve out-of-distribution tasks.
\end{enumerate}

Note that support for \textbf{H2} provides implicit support for \textbf{H1}.

\textbf{Domains}\ \ \ \ We test \methodName on two continuous control domains. First is a simple, but controlled, \env{2D Navigation} domain, where the agent must travel from one point in a square arena to another. States are 2D $(x, y)$ coordinates and actions are 2D $(\Delta x, \Delta y)$ vectors. In most of the state space, the sub-actions $\Delta x$ and $\Delta y$ affect only their respective coordinate. In the top right quadrant, however, the $\Delta x$ and $\Delta y$ sub-actions each affect \textit{both} $x$ and $y$ coordinates, so that the environment is locally factored. The agent has access to empirical training data consisting of left-to-right and bottom-to-top trajectories that are restricted to a \lowerl shape of the state space (see the \ad{Emp} distribution in Figure \ref{fig_toy_visualization}). We consider a target task where the agent must move from the bottom left to the top right. In this task there is sufficient empirical data to solve the task by following the $\lowerl$ shape of the data, but learning the optimal policy of going directly via the diagonal requires out-of-distribution generalization.

\begin{wrapfigure}{r}{0.23\textwidth}
    \vspace{-\baselineskip}
	\centering
    \captionsetup{width=0.22\textwidth}
	\includegraphics[width=0.22\textwidth,height=0.17\textwidth]{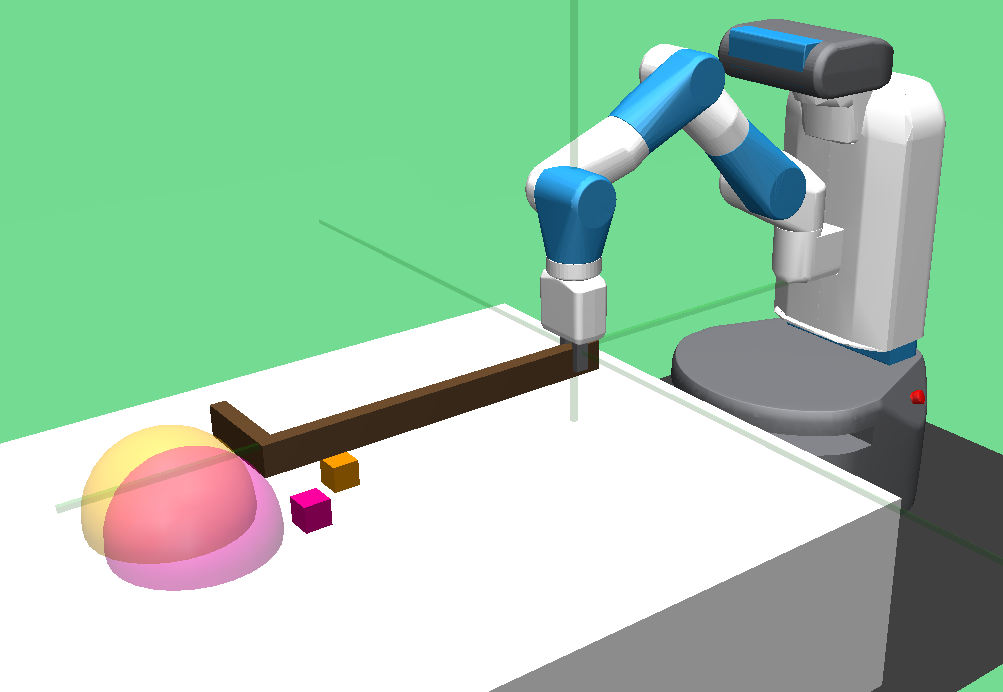}
	\label{fig_env_fetch}
    \vspace{-\baselineskip}
\end{wrapfigure}
Second, we test \methodName in a challenging \env{HookSweep2} robotics domain based on Hook-Sweep \citep{kurenkov2020ac}, in which a Fetch robot must use a long hook to sweep two boxes to one side of the table (either toward or away from the agent). The boxes are initialized near the center of the table, and the empirical data contains trajectories of the agent sweeping exactly one box to one side of the table, leaving the other in the center. The target task requires the agent to generalize to states that it has never seen before (both boxes together on one side of the table). This is particularly challenging because the setup is entirely offline (no exploration), where poor out-of-distribution generalization typically requires special offline RL algorithms that constrain the agent's policy to the empirical distribution \citep{levine2020offline,agarwal2020optimistic,kumar2020conservative,fujimoto2021minimalist}.

\newcommand{\buffer}{\hspace{1.7cm}}
\begin{figure}[!b]
	\centering
	\includegraphics[width=\textwidth]{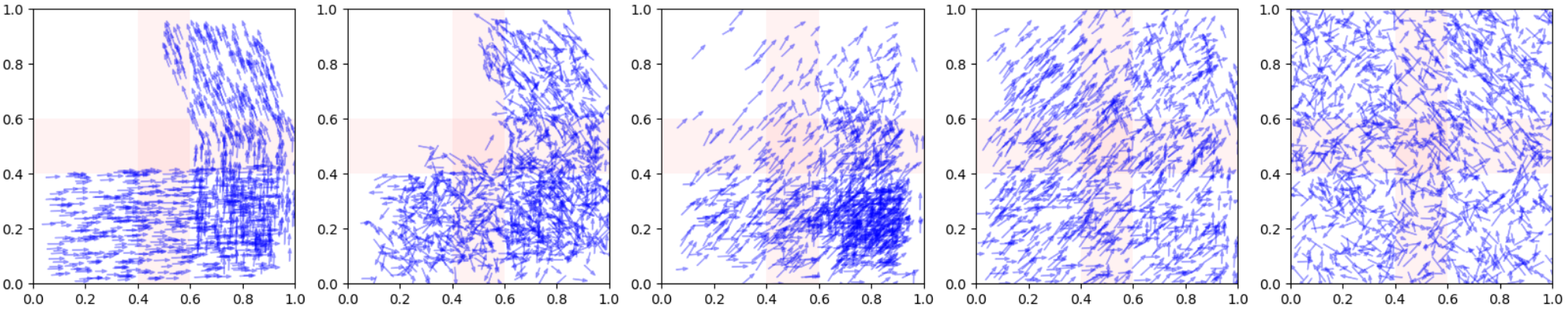}\\
	{\small\hspace{0.4cm}\ad{Emp}\hspace{0.4cm}\buffer\ad{Dyna}\buffer\ad{Mocoda}\hspace{-0.4cm}\buffer\ad{Mocoda-U}\hspace{0.05cm}\buffer\ad{Rand}\hfill}\\[6pt]
	\caption{\textbf{2D Navigation Visualization.} (Best viewed with 2x zoom) Blue arrows represent transition samples as a vector from $(x_t, y_t)$ to $(x_{t+1}, y_{t+1})$. Shaded red areas mark the edges of the initial states of empirical trajectories and the center of the square. We see that 5-step rollouts (\ad{Dyna}) do not fill in the center (needed for optimal policy), and fail to constrain actions to those that the model generalizes well on. For \ad{Mocoda}, we see the effect of compounding dataset imbalance discussed in Subsection \ref{subsection_generating_ad}, which is resolved by \ad{Mocoda-U}.}
	\label{fig_toy_visualization}
\vspace{-0\baselineskip}
\end{figure}

{\small
\tabcolsep=0pt\def\arraystretch{1.3}
\begin{table*}[!t]
\caption{\textbf{2D Navigation Dynamics Modeling Results:} Mean squared error $\pm$ 
\label{tab:toy-mse}
std. dev. over 5 seeds, scaled by 1e2 for clarity (best model boldfaced). The locally factored model experienced less performance degradation out-of-distribution, and performed better on all distributions, except for the empirical distribution (\ad{Emp}) itself.}\label{fig_toy_mse_results}
\newcommand{\smol}[1]{{\scriptsize\texttt{#1}}} 
\centering\small
\begin{tabularx}{\textwidth}{>{\centering\arraybackslash}p{3.2cm}| *5{>{\Centering}X}}\toprule	

&\multicolumn{5}{c}{\textbf{Generalization Error (MSE $\times 1e2$)} (lower is better)}\\
\cline{2-6}
Model Architecture&   \ad{Emp}  &  \ad{Dyna} &  \ad{Rand} & \ad{\textbf{MoCoDA}}  &  \ad{\textbf{MoCoDA-U}}   \\
	\midrule
Not Factored & \textbf{0.14 $\pm$ 0.04} &  2.41 $\pm$ 0.29 &  4.4 $\pm$ 0.31  & 0.95 $\pm$ 0.06 &  1.29 $\pm$ 0.15  \\
Globally Factored & 0.36 $\pm$ 0.01 &  2.09 $\pm$ 0.28  &  3.17 $\pm$ 0.3 &  0.41 $\pm$ 0.02 &  0.51 $\pm$ 0.02 \\
Locally Factored & 0.23 $\pm$ 0.1 &  \textbf{1.47 $\pm$ 0.2}7   &  \textbf{2.03 $\pm$ 0.19} &  \textbf{0.33 $\pm$ 0.11} &  \textbf{0.46 $\pm$ 0.11} \\
	\bottomrule
\end{tabularx}
\vspace{0.5\baselineskip}
\end{table*}
}

{\small
\tabcolsep=0pt\def\arraystretch{1.3}
\begin{table*}[!t]
\caption{\textbf{2D Navigation Offline RL Results:} Average steps to completion $\pm$ std. dev. over 5 seeds for various RL algorithms (best distribution in each row boldfaced), where average steps was computed over the last 50 training epochs. Training on \ad{Mocoda} and \ad{Mocoda-U} improved performance in all cases. Interestingly, even using \ad{Rand} improves performance, indicating the importance of training on out-of-distribution data. Note that this is an offline RL task, and so \ad{SAC} (an algorithm designed for online RL) is not expected to perform well.}
\label{fig_toy_batchrl_results}

\newcommand{\smol}[1]{{\scriptsize\texttt{#1}}}
\centering\small
\begin{tabularx}{\textwidth}{>{\centering\arraybackslash}p{3.1cm}| *5{>{\Centering}X}}\toprule	

\multicolumn{5}{c}{\textbf{Average Steps to Completion} (lower is better)}\\
\cline{2-6}
RL Algorithm&   \ad{Emp}  &  \ad{Rand} & \ad{\textbf{MoCoDA}}  &  \ad{\textbf{MoCoDA-U}}  & \ad{CoDA} \cite{pitis2020counterfactual} \\
	\midrule
\env{SAC} (online RL) & 53.1 $\pm$ 9.8  &  \textbf{27.6 $\pm$ 1.1} &  38.8 $\pm$ 18.3 &  41.3 $\pm$ 17.7 & 35.1 $\pm$ 18.1\\
\env{BCQ} & 58.5 $\pm$ 10.1  &  31.7 $\pm$ 2.4 &   \textbf{22.8 $\pm$ 0.4} &  24.8 $\pm$ 4.2 & 25.0 $\pm$ 0.4 \\
\env{CQL} & 45.8 $\pm$ 4.0 &  27.6 $\pm$ 1.3 &  22.8 $\pm$ 0.2 &  \textbf{22.7 $\pm$ 0.3} & 23.6 $\pm$ 0.5 \\
\env{TD3-BC} & 40.0 $\pm$ 16.1 &  26.1 $\pm$ 0.8 &  21.0 $\pm$ 0.7 &  \textbf{20.7 $\pm$ 0.8} & 21.4 $\pm$ 0.6 \\
	\bottomrule
\end{tabularx}
\vspace{1.2\baselineskip}
\end{table*}
}

\textbf{Directly comparing model generalization error.}\ \ \ \ In the \env{2D Navigation} domain we have access to the ground truth dynamics, which allows us to directly compare generalization error on variety of distributions, visualized in Figure \ref{fig_toy_visualization}. We compare three different model architectures: unfactored, globally factored (assuming that the $(x, \Delta x)$ and $(y, \Delta y)$ causal mechanisms are independent everywhere, which is not true in the top right quadrant), and locally factored. The models are each trained on a empirical dataset of 35000 transitions for up to 600 epochs, which is early stopped using a validation set of 5000 transitions. The results are shown in Table \ref{fig_toy_mse_results}. We find strong support for \textbf{H1}: even given the simple dynamics of \env{2d Navigation}, it is clear that the locally factored model is able to generalize better than a fully connected model, particularly on the \ad{Mocoda} distribution, where performance degradation is minimal. We note that the \ad{Dyna} distribution was formed by starting in \ad{Emp} and doing 5-step rollouts with \textit{random} actions. The random actions produce out-of-distribution data to which no model (not even the locally factored model) can generalize well to.

\vspace{0.2\baselineskip}
\textbf{Solving out-of-distribution tasks.}\ \ \ \ We apply the trained dynamics models to several base distributions and compare the performance of RL agents trained on each dataset. To ensure improvements are due to the augmented dataset and not agent architecture, we train several different algorithms, including: \env{SAC} \citep{haarnoja2018soft}, \env{BCQ} \citep{fujimoto2019off} (with DDPG \citep{lillicrap2015continuous}), \env{CQL} \citep{kumar2020conservative} and \env{TD3-BC} \citep{fujimoto2021minimalist}. 

The results on \env{2D Navigation} are shown in Table \ref{fig_toy_batchrl_results}. We see that for all algorithms, the use of the \ad{Mocoda} and \ad{Mocoda-U} augmented datasets greatly improve the average step count, providing support for \textbf{H2} and suggesting that using these datasets allows the agents to learn to traverse the diagonal of the state space, even though it is out-of-distribution with respect to \ad{Emp}. This is consistent with a qualitative assessment of the learned policies, which confirms that agents trained on the \lowerl\hspace{-0.12cm}-shaped \ad{Emp} distribution learn a \lowerl\hspace{-0.12cm}-shaped policy, whereas agents trained on \ad{Mocoda} and \ad{Mocoda-U} learn the optimal (diagonal) policy.

The results on the more complex \env{HookSweep2} environment, shown in Table \ref{table_fetch_batchrl_results}, provide further support for \textbf{H2}. On this environment, only results for \env{BCQ} and \env{TD3-BC} are shown, as the other algorithms failed on all datasets. For \env{HookSweep2} we used a prioritized \ad{Mocoda-P} parent distribution, as follows: knowing that the target task involves placing two blocks, we applied rejection sampling to \ad{Mocoda} to make the marginal distribution of the joint block positions approximately uniform over its support. The effect is to have good representation in all areas of the most important state features for the target task (the block positions). The visualization in Figure \ref{fig_fetch_parent_distributions} makes clear why training on \ad{Mocoda} or \ad{Mocoda-P} was necessary in order to solve this task: the base \ad{Emp} distribution simply does not have sufficient coverage of the goal space.  

\vfill

\begin{figure}[!t]
	\centering
	\includegraphics[width=0.9\textwidth]{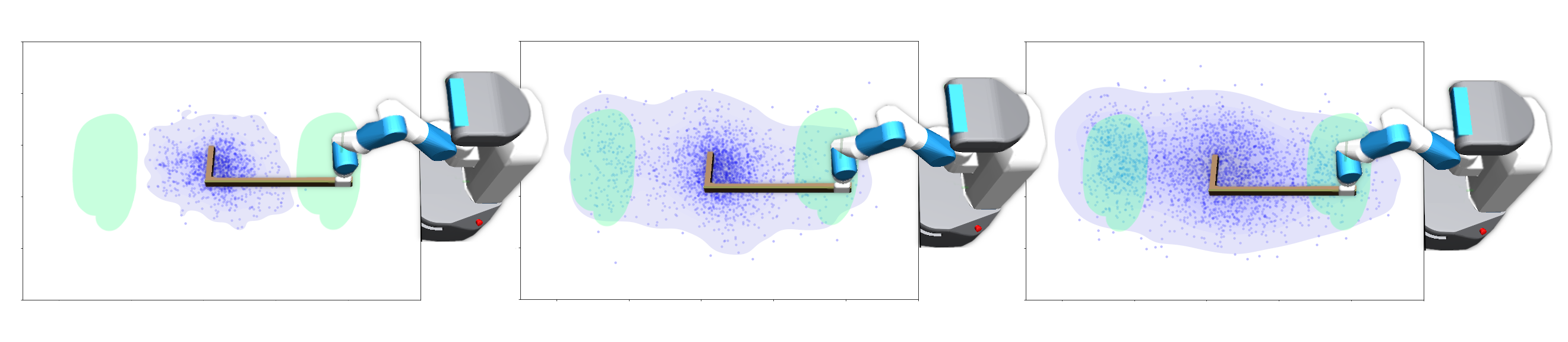}
	\vspace{-0.5\baselineskip}
	\caption{\textbf{HookSweep2 Visualization:} Stylized visualization of the distributions \ad{Emp} (left), \ad{Mocoda} (center), and \ad{Mocoda-P} (right). Each figure can be understood as a top down view of the table, where a point is a plotted if the two blocks are close together on the table. The distribution \ad{Emp} does not overlap with the green goal areas on the left and right, and so the agent is unable to learn. In the \ad{Mocoda} distribution, the agent gets some success examples. In the \ad{Mocoda-P} distribution, state-actions are reweighted so that the joint distribution of the two block positions is approximately uniform, leading to more evenly distributed coverage of the table.}
	\label{fig_fetch_parent_distributions}
	\vspace{-0\baselineskip}
\end{figure}

{\small
\tabcolsep=0pt\def\arraystretch{1.3}
\begin{table*}[!t]
\newcommand{\smol}[1]{{\scriptsize\texttt{#1}}}
\centering\small
\begin{tabularx}{\textwidth}{>{\centering\arraybackslash}p{3cm}| *5{>{\Centering}X}}
\toprule	
&\multicolumn{5}{c}{\textbf{Average Success Rate} (higher is better)}\\
\cline{2-6}
RLAlgorithm&  \vspace{-3px} \ad{Emp}  &\vspace{-3px}  \ad{\textbf{MoCoDA}}  & \vspace{-3px}  \ad{\textbf{MoCoDA-P}} & \ad{{MoCoDA}} (not factored) & \ad{{MoCoDA-P}} (not factored) \\
\midrule
\env{BCQ} & 2.0 $\pm$ 1.6 &  20.7 $\pm$ 4.1 &  \textbf{64.7 $\pm$ 4.1} & 14.0 $\pm$ 3.3 & 15.3 $\pm$ 4.1 \\
\env{TD3-BC} & 0.7 $\pm$ 0.9 &  38.7 $\pm$ 7.5 &  \textbf{84.0 $\pm$ 2.8} & 29.3 $\pm$ 3.8 & 26.0 $\pm$ 1.6 \\
	\bottomrule
\end{tabularx}
	\vspace{0.02in}
	\caption{\textbf{HookSweep2 Offline RL Results:} Average success percentage ($\pm$ std. dev. over 3 seeds), where the average was computed over the last 50 training epochs. 
	\env{SAC} and \env{CQL} (omitted) were unsuccessful with all datasets.
	We see that \ad{Mocoda} was necessary for learning, and that results improve drastically with \ad{Mocoda-P}, which re-balances \ad{Mocoda} toward a uniform distribution in the box coordinates (see Figure \ref{fig_fetch_parent_distributions}).
	Additionally, we show results from an ablation, which generates the MoCoDA datasets using a fully connected dynamics model. While this still achieves some success, it demonstrates that using a locally-factored model is important for OOD generalization. In this case the more OOD MoCoDA-P distribution does not help, suggesting that the fully connected model is failing to produce useful OOD transitions.
	}
	\label{table_fetch_batchrl_results}
	\vspace{\baselineskip}
\end{table*}
}

\section{Conclusion}\label{sec:conclusion}
In this paper, we tackled the challenging yet common setting where the available empirical data provides insufficient coverage of critical parts of the state space. 
Starting with the insight that locally factored transition models are capable of generalizing outside of the empirical distribution, we proposed \methodName, a framework for augmenting available data using a controllable ``parent distribution'' and locally factored dynamics model.
We find that adding augmented samples from \methodName allows RL agents to learn policies that traverse states and actions never before seen in the experience buffer.
Although our data augmentation is ``model-based'', the transition samples it produces are compatible with any downstream RL algorithm that consumes single-step transitions. 

Future work might (1) explore methods for learning locally factorized representations, especially in environments with high-dimensional inputs (e.g., pixels) \cite{jiang2019scalor,kipf2019contrastive}, and consider how \methodName might integrate with latent representations, (2) combine the insights presented here with learned predictors of out-of-distribution generalization (e.g., uncertainty-based prediction) \cite{pan2020trust}, (3)
create benchmark environments for entity-based RL \cite{winter2022} so that object-oriented methods and models can be better evaluated, and (4) explore different approaches to re-balancing the training distribution for learning on downstream tasks. With regards to direction (1), we note that asserting (or not) certain independence relationships may have fairness implications for datasets \cite{park2018reducing,creager20a} that should be kept in mind or explored. This is relevant also in regards to direction 4, as dataset re-balancing may result in (or fix) biases in the data \cite{krasanakis2018adaptive}. Re-balancing schemes should be sensitive to this. 

\begin{ack}
We thank Jimmy Ba, Marc-Etienne Brunet, and Harris Chan for helpful comments and discussions. We also thank the anonymous reviewers for their feedback, which significantly improved the final manuscript.
Silviu Pitis is supported by an NSERC CGS-D award. Animesh Garg is supported as a CIFAR AI chair, and by an NSERC Discovery Award, University of Toronto XSeed Grant and NSERC Exploration grant.
Resources used in preparing this research were provided, in part, by the Province of Ontario, the Government of Canada, and companies sponsoring the Vector Institute.\end{ack}

\vfill

\newpage
{\small
\bibliographystyle{plainnat}
\bibliography{refs}
}

\newpage

\section*{Checklist}


\begin{enumerate}

\item For all authors...
\begin{enumerate}
  \item Do the main claims made in the abstract and introduction accurately reflect the paper's contributions and scope?
    \answerYes{}
  \item Did you describe the limitations of your work?
    \answerYes{See \autoref{subsection_generating_ad}}
  \item Did you discuss any potential negative societal impacts of your work?
    \answerYes{See \autoref{sec:conclusion} and \autoref{broader_impacts}}
  \item Have you read the ethics review guidelines and ensured that your paper conforms to them?
    \answerYes{}
\end{enumerate}

\item If you are including theoretical results...
\begin{enumerate}
  \item Did you state the full set of assumptions of all theoretical results?
    \answerYes{See \autoref{appdx_proposition}}
  \item Did you include complete proofs of all theoretical results?
    \answerYes{See \autoref{appdx_proposition}}
\end{enumerate}

\item If you ran experiments...
\begin{enumerate}
  \item Did you include the code, data, and instructions needed to reproduce the main experimental results (either in the supplemental material or as a URL)?
    \answerYes{https://github.com/spitis/mocoda}
  \item Did you specify all the training details (e.g., data splits, hyperparameters, how they were chosen)?
    \answerYes{Yes in \autoref{sec:empirical} and in the \autoref{exp_details}}
        \item Did you report error bars (e.g., with respect to the random seed after running experiments multiple times)?
    \answerYes{}{}
        \item Did you include the total amount of compute and the type of resources used (e.g., type of GPUs, internal cluster, or cloud provider)?
    \answerYes{Yes in \autoref{exp_details}}
\end{enumerate}

\item If you are using existing assets (e.g., code, data, models) or curating/releasing new assets...
\begin{enumerate}
  \item If your work uses existing assets, did you cite the creators?
    \answerYes{We cited \cite{mrl,mandlekar2020learning} for our RL framework \& Hook environment assets; both are open-source.}
  \item Did you mention the license of the assets?
    \answerYes{}
  \item Did you include any new assets either in the supplemental material or as a URL?
    \answerYes{The changes will be included with the code.}
  \item Did you discuss whether and how consent was obtained from people whose data you're using/curating?
    \answerNA{}
  \item Did you discuss whether the data you are using/curating contains personally identifiable information or offensive content?
    \answerNA{}
\end{enumerate}

\item If you used crowdsourcing or conducted research with human subjects...
\begin{enumerate}
  \item Did you include the full text of instructions given to participants and screenshots, if applicable?
    \answerNA{}
  \item Did you describe any potential participant risks, with links to Institutional Review Board (IRB) approvals, if applicable?
    \answerNA{}
  \item Did you include the estimated hourly wage paid to participants and the total amount spent on participant compensation?
    \answerNA{}
\end{enumerate}

\end{enumerate}


\newpage
\appendix

\section{Proof of Theorem 1}\label{appdx_proposition}
The dynamics model assumed is a maximum-likelihood, count-based model that has separate parameters for each causal mechanism, $P_{i, \theta}^\L$, in each local neighborhood.
That is, for a given configuration of the parents $\parents_i = x$ in $P_{i, \theta}^\L$, we define count parameter $\theta_{ij}$ for the $j$-th possible child, $c_{ij}$, so that $P_{i, \theta}^\L(c_{ij}\,|\,x) = \theta_j/\sum_{k=1}^{|c_i|}\theta_k$.

We use the following two lemmas (see source material for proof):
\vspace{\baselineskip}

\begin{lemma}[Proposition A.8 of \citet{agarwal2019reinforcement}]\label{lemma_concentration_l1}
Let $z$ be a discrete random variable that takes values in $\{1, \dots, d\}$, distributed according to $q$.
We write $q$ as a vector where $\vec{q} = [\textrm{Pr}(z=j)]_{j=1}^d$.
Assume we have $n$ i.i.d. samples, and that our empirical estimate of $\vec{q}$ is $[\vec{q}]_j = \sum_{i=1}^n\mathbf{1}[z_i=j]/n$.\\
We have that $\forall\epsilon > 0$:
$$\textrm{Pr}(\Vert\hat{q} - \vec{q}\Vert_2 \geq 1 / \sqrt{n} + \epsilon) \leq e^{-n\epsilon^2}$$
which implies that:
$$\textrm{Pr}(\Vert\hat{q} - \vec{q}\Vert_1 \geq \sqrt{d}(1/\sqrt{n} + \epsilon)) \leq e^{-n\epsilon^2}$$

\end{lemma}
\vspace{\baselineskip}

\begin{lemma}[Corollary 1 of \citet{strehl2007model}]\label{lemma_aggregate_l1}
If for all states and actions, each model $P_{i,\theta}$ of $P_i$ is $\epsilon/k$ close to the ground truth in terms of the $\lone$ norm: $\Vert P_i(s, a) - P_{i,\theta}(s, a) \Vert_1 < \epsilon/k$, then the aggregate transition model $P_\theta$ is $\epsilon$ close to the ground truth transition model: $\Vert P(s, a) - P_{\theta}(s, a) \Vert_1 < \epsilon$.
\end{lemma}
\vspace{\baselineskip}

\begin{appdxTheorem}{1}
Let $n$ be the number of empirical samples used to train the model of each local causal mechanism $P_{i, \theta}^\L$ at each configuration of parents $\parents_i\! =\! x$.
There exists positive constant $c$ such that, if
$$
n \geq \frac{ck^2|c_i|\log(|\S||\A|/\delta)}{\epsilon^2},
$$
then, with probability at least $1-\delta$, we have:
$$\max_{(s, a)} \Vert P(s, a) - P_{\theta}(s, a) \Vert_1 \leq  \epsilon.$$
\end{appdxTheorem}

\begin{proof}
Applying Lemma \ref{lemma_concentration_l1}, we have that for fixed parents $\parents_i = x$, wp. at least $1-\delta$,

$$\Vert P_i(x) - P_{i,\theta}(x) \Vert_1 \leq c\sqrt{\frac{|c_i|\log(1/\delta)}{n}},$$

where $n$ is the number of independent samples used to train $P_{i,\theta}$ and $c$ is a positive constant.
Now consider a fixed $(s,a)$, consisting of $k$ parent sets.
Applying Lemma \ref{lemma_aggregate_l1} we have that, wp. at least $1\!-\!\delta$, 

$$\Vert P(s,a) - P_{\theta}(s,a) \Vert_1 \leq ck\sqrt{\frac{|c_i|\log(1/\delta)}{n}}.$$

We apply the union bound across all states and actions to get that wp. at least $1-\delta$,

$$\max_{(s, a)} \Vert P(s, a) - P_{\theta}(s, a) \Vert_1 \leq ck\sqrt{\frac{|c_i|\log(|S||A|/ \delta)}{n}}.$$

The result follows by rearranging for $n$ and relabeling $c$.
\end{proof}

To compare to full-state dynamics modeling, we can translate the sample complexity from the per-parent count $n$ to a total count $N$.
Recall $m\Pi_i|c_i| = |\S|$, so that $|c_i| = (|\S|/m)^{1/k}$, and $m\Pi_i|\parents_i| \geq |\S||\A|$.
We assume a small constant overlap factor $v \geq 1$, so that $|\parents_i| = v(|\S||\A|/m)^{1/k}$.
We need the total number of component visits to be $n|\parents_i|km$, for a total of $nv(|\S||\A|/m)^{1/k}m$ state-action visits, assuming that parent set visits are allocated evenly, and noting that each state-action visit provides $k$ parent set visits.
This gives:

\begin{appdxCorollary}{1}
To bound the error as above, we need to have
$$N \geq \frac{cmk^2(|\S|^2|\A|/m^2)^{1/k}\log(|\S||\A|/\delta)}{\epsilon^2},$$
total train samples, where we have absorbed the overlap factor $v$ into constant $c$.
\end{appdxCorollary}

To extend this and adapt other results to our setting, we could now apply the Simulation Lemma \cite{agarwal2019reinforcement} to bound the value difference given the model error, or alternatively, develop the theory in the direction of \cite{strehl2007model} and related work.
However, we believe the core insights are already contained in Theorem 1 and Corollary 1. 

\section{Implementation Details}\label{impl_details}

Code is available at https://github.com/spitis/mocoda (using https://github.com/spitis/mrl for RL algorithms).
There are numerous components involved that each have several different settings that were mostly just taken ``as-is'' or picked as reasonable defaults (e.g., using a layer size of 512 in most neural networks, or having 5 components in the MDN, or the specific implementation of rejection sampling for \texttt{Mocoda-U}).
The best documentation for specific details is the code itself.
As such, the implementation details below cover the broad strokes so that a reader might understand the general pipeline, and we refer the reader to the provided code for precise details.

\subsection{Causal Transition Structure and Parent Set Definitions}

We implement the local causal model as a mask function $M$ that maps (state, action) tuples to an adjacency matrix of the causal structure.
For example, in \texttt{2d Navigation}, the mask function was implemented as follows:

{\footnotesize
\begin{verbatim}
  def Mask2dNavigation(input_tensor):
    """
    accepts B x num_sa_features, and returns B x num_parents x num_children
    """

    # base local mask
    mask = torch.tensor(
      [[1, 0],
       [0, 1],
       [1, 0],
       [0, 1]]).to(input_tensor.device)

    # change local mask in top right quadrant
    mask = mask[None].repeat((input_tensor.shape[0], 1, 1))
    mask[torch.logical_and(input_tensor[:,0] > 0.5, input_tensor[:, 1] > 0.5)] = 1
    
    return mask
\end{verbatim}
}
As an example, the causal graph for the base local mask, which applies for most of the state space is shown in the figure ~\ref{fig:cg-local-mask}.
We used the base local graph to select the parent sets, in this case, $(x, \Delta x)$ and $(y, \Delta y)$. 

\begin{figure}[!h]
	\centering
	\includegraphics[width=0.2\textwidth]{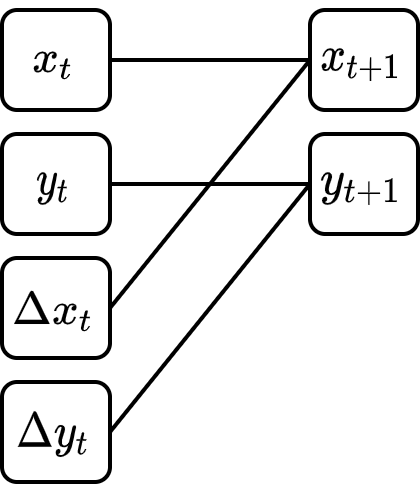}
	\caption{Causal graph for local mask}
	\label{fig:cg-local-mask}
\end{figure}

\subsection{Parent Distribution}\label{appdx_parentdist}

To sample the parent distribution in Step 1 of \methodName, we use the Gaussian Mixture Model (GMM) based approach described in the main text.
The advantage of this approach is that we can easily do conditional sampling in case of overlapping parent sets.
For a given local subset $\L$, we fit a separate GMM to the marginal of each parent set, as it appears in the empirical distribution for $\L$.
To generate a new sample, we optionally shuffle the GMMs, and then sample from one GMM at a time, conditioning on any already generated features.
This process eliminates any spurious correlations between features that are not part of the same parent set, and thus results in the maximum-entropy, marginal matching distribution.

\begin{figure}[!t]
	\centering
	\includegraphics[width=0.8\textwidth]{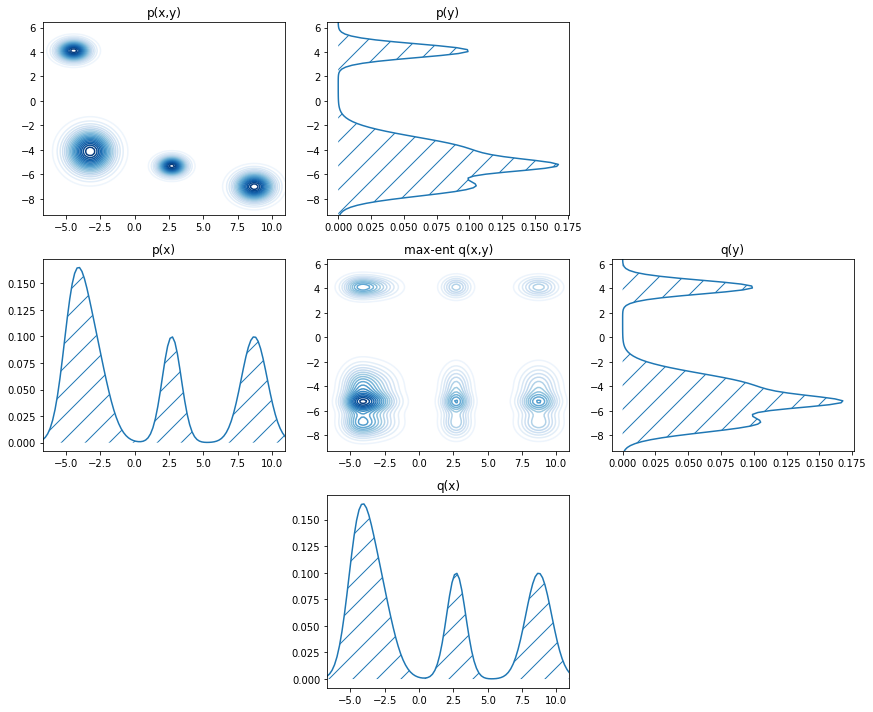}
	\caption{Hypothetical 2D illustration of the GMM-based parent set sampler.
  It is assumed that there are two non-overlapping parent sets $\{x\}$ and $\{y\}$, but that $x$ and $y$ exhibit dependence in the empirical data.
  We fit a GMM to each marginal $P(x)$ and $P(y)$ and sample from them independently to get $Q(x, y)$, which has the same marginal distributions (so that the components in a locally factored dynamics model will generalize), but eliminates the spurious dependence in the empirical data.}
\end{figure}

In cases of multiple local neighborhoods, $\L_1, \L_2, \dots$, one should respect the boundaries of the current local subset $\L$ during both training \textit{and} generation.
If a sample generated with the GMM for $\L$ falls outside of $\L$, that sample should be rejected, as the local causal structure is no longer valid, and the generalization guarantee for the locally factored model no longer holds.

As the local factorization in our experiments is quite simple, we did not stratify the GMM generator, and instead used a single GMM generator for the sparsest local causal structure.
In the case of \texttt{2d Navigation} this did not generate any data that was out-of-distribution for the locally factored model components (as the agent's policy was consistent in all local neighborhoods).
In the case of \texttt{HookSweep2}, there was a bit of locally out-of-distribution data in the local subspace in which there is a block collision; however, most of this data is unreachable as it involves overlapping blocks, and we obtained strong results even with this shortcut.

\subsection{Dynamics Models}\label{sec:dynamics-models}

Our experiments used three different dynamics models.
In each case, we used an ensemble of 5 base models, described below.
The base models output a Gaussian mean and variance for each output variable and are trained independently via a negative log likelihood loss.
All models are trained using Adam Optimizer~\cite{kingma2014adam}.

\begin{enumerate}
    \item \textbf{Unfactored:} The base model is a fully connected neural network with ReLU activations (MLP).
    \item \textbf{Globally Factored:} The base model has one MLP for each causal mechanism in the sparsest local graph.
    For both \texttt{2d Navigation} and \texttt{HookSweep2} the sparest local graph has two components, so the base global model is composed of two MLPs.
    \item \textbf{Locally factored:} The base model is designed as follows.
    For each child node, $c_i$, there is a separately parameterized single MLP that is preceded by a ``Masked Composer'' module.
    The Masked Composer applies a single layer MLP (linear transform followed by ReLU) to each root node, $r_i$ (each parent set has several nodes), to obtain embeddings $\varepsilon_i(r_i)$.
    The $i$-th column of the mask is used to zero out the corresponding embeddings which are then summed, $\sum_i M_{ij}\varepsilon_i(r_i)$, and the result is passed as an input to the MLP.
    
    This architecture works (and enforces local factorization), but is likely poor, because it does not take advantage of potentially useful shared representations between parent nodes across children (since there is a separately parameterized Masked Composer for each child).
    A better architecture would likely use a single parameterization for a single, possibly deeper Masked Composer.
    As this is not the focus of our contribution, we stuck with simple model, as it ``just worked'' for purposes of our experiments. 
\end{enumerate}

\subsection{Training Data for the RL Algorithm}

This varied by experiment, and is described in the next Section.
Notably, we divided the standard deviation returned by our dynamics models by a factor of three when generating data to avoid data that was too far out of distribution.

\subsection{Reinforcement Learning Algorithms}

We use Modular RL \cite{mrl}, adding three offline RL \cite{levine2020offline} algorithms: 
{BCQ} \citep{fujimoto2019off}, {CQL} \citep{kumar2020conservative} \& {TD3-BC} \citep{fujimoto2021minimalist}.

The {BCQ} implementation uses DDPG \cite{lillicrap2015continuous}.
For the generative model we use a Mixture Density Network (MDN) \cite{bishop1994mixture} with 5 components, that produces 20 action samples at each call (both during test rollouts and when creating critic targets).
The MDN was trained for 1000 batches with batch size of 2000.
We did not use a perturbation model. 

The {CQL} implementation uses SAC \cite{haarnoja2018soft}.
Rewards in our environments are sparse, and so value targets can be accurately clipped between two values (depends on the discount factor).
CQL balances two losses: a penalty for Q-values of some non-behavioral distribution/policy (we use a random policy), and a bonus for the Q-values behavioral actions.
We use an L1 penalty toward the lower end of the value target clipping range, and an L1 bonus toward the higher end of the value target clipping range.
We then multiply that by a minimum Q coefficient, as in the original CQL implementation. 

The TD3-BC implementation follows \citet{fujimoto2021minimalist}. 

\section{Experimental Details}\label{exp_details}

\subsection{2D Navigation}  In this environment, the agent must travel from one point in a square arena to another.
States are 2D $(x, y)$ coordinates and actions are 2D $(\Delta x, \Delta y)$ vectors. 

{\footnotesize
\begin{verbatim}
    observation_space = spaces.Box(np.zeros((2,)), np.ones((2,)), dtype=np.float32)
    action_space = spaces.Box(-np.ones((2,)), np.ones((2,)), dtype=np.float32)
\end{verbatim}
}

Episodes run for up to 70 steps.
Rewards are sparse, with a -1 reward everywhere except the goal, where reward is 0.
In most of the state space, the sub-actions $\Delta x$ and $\Delta y$ affect only their respective coordinate.
In the top right quadrant, however, the $\Delta x$ and $\Delta y$ sub-actions each affect \textit{both} $x$ and $y$ coordinates, so that the environment is locally factored.
The two causal graphs are as follows:

\newpage 

\begin{figure}[!h]
	\centering
	\includegraphics[width=0.45\textwidth]{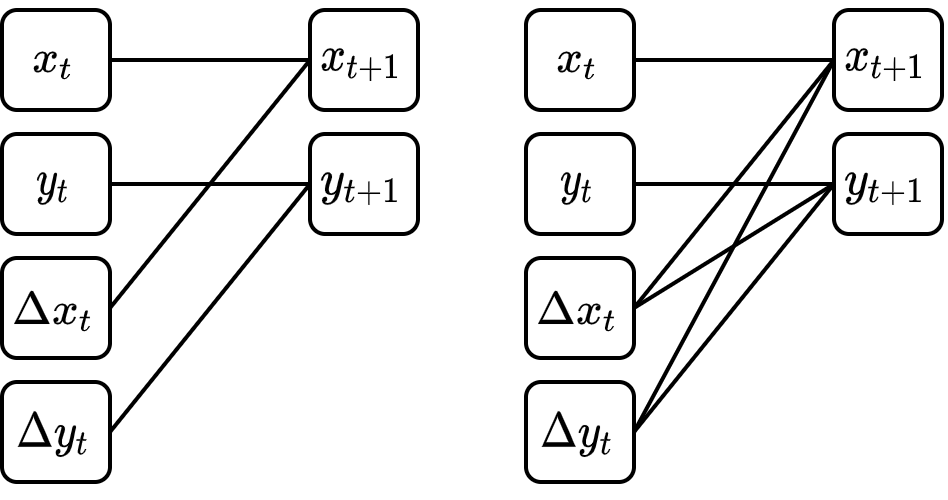}
\end{figure}

The graph on the right applies only in the top-right quadrant; otherwise the graph on the left applies.
The graph on the left has non-overlapping parent sets $(x, \Delta x)$ and $(y, \Delta y)$.
The graph on the right has overlapping parent sets $(x, \Delta x, \Delta y)$ and $(y, \Delta x, \Delta y)$.

The agent has access to an empirical dataset consisting of left-to-right \& bottom-to-top trajectories (20,000 transitions of each type):

\begin{figure}[!h]
	\centering
	\includegraphics[width=0.8\textwidth]{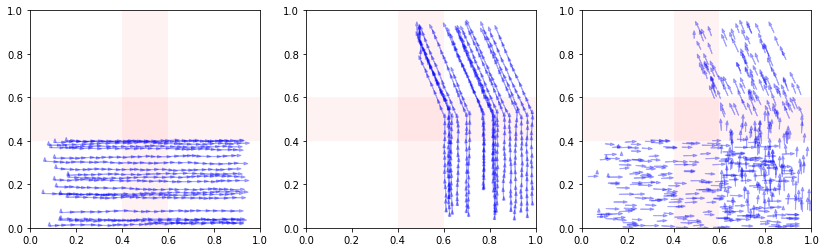}
	\caption{\textbf{Left/Middle:} Random samples of the two types of trajectories the agent has access to. \textbf{Right:} Random sample of transitions from this empirical dataset.}
\end{figure}

We consider a target task where the agent must move from the bottom left to the top right.
In this task there is sufficient empirical data to solve the task by following the ${\Large \textbf{$\lrcorner$} }$ shape of the data, but learning the optimal policy of going directly via the diagonal requires out-of-distribution generalization.

For \env{2d Navigation}, we generated the \ad{Mocoda} distribution by fitting a GMM generator as described in the previous Section.
Each GMM (one for each parent set) had 32 components, and was fit using expectation maximization.
To obtain \ad{Mocoda-U}, we implemented rejection sampling by using a KDE density estimator is as follows:

{\footnotesize
\begin{verbatim}
def prune_to_uniform(proposals, target_size=12000.):
  from sklearn.neighbors import KernelDensity
  sample = proposals[-10000:]

  fmap = lambda s: s[:, :2]
  K = KernelDensity(bandwidth=0.05)
  K.fit(fmap(sample))
  scores = K.score_samples(fmap(proposals))
  scores = np.maximum(scores, np.log(0.01))
  scores = (1. / np.exp(scores))
  scores = scores / scores.mean()  * (target_size / len(proposals))
  
  return proposals[np.random.uniform(size=scores.shape) < scores]   
\end{verbatim}
}

The dynamics models each had 2 layers of 256 neurons and were trained with a batch size of 512 and learning rate of 1e-4.
Hyperparameters were not tuned once a working setting was found.
Of the 40K empirical samples, 35K were used for training, and 5000 for validation.
The models were trained for 600 epochs, with early stopping used in the last 50 epochs to find a locally optimal stopping point. 

Augmented datasets of 200K samples were generated.
In each case except \ad{Emp}, 40K were the original empirical dataset (thus 160K new samples were generated by applying the dynamics model to samples from the augmented distribution).
In case of \ad{Emp}, the 40K original samples were simply repeated 5 times to get a size 200K dataset.
The locally factored network was used to generate the augmented datasets. 

These augmented distributions were then used to train the downstream RL agents.
The agent algorithms used a discount factor of 0.98, a target cutoff range of (-50, 0), batch size of 500, and used 2 layers of 512 neurons in both actor and critic networks.
The agents were trained for 25K batches (for a total of 62.5 passes over the dataset). 

For \env{2D Navigation} we ran 5 seeds, which all yielded similar results.
For each seed we trained new parent set samplers and generated new augmented datasets.

\subsection{HookSweep2}

\env{HookSweep2} is a challenging robotics domain based on Hook-Sweep~\citep{kurenkov2020ac}, in which a Fetch robot must use a long hook to sweep two boxes to one side of the table (either toward or away from the agent).
States, excluding the goal, are 16 dimensional continuous vectors.
Goals are 6 dimensions.
The agents all concatenate the goal to the state, and so operate on 22 dimensional states.
The action space is a 4 dimensional continuous vector. 

The environment contains two boxes that are initialized near the center of the table.

The empirical data contains 1M transitions from trajectories of an expert agent sweeping exactly one box to one side of the table, leaving the other in the center. The target task requires the agent to sweep \textit{both} boxes together to one side of the table.
This is particularly challenging because the setup is entirely offline (no exploration), where poor out-of-distribution generalization typically requires special offline RL algorithms that constrain the agent's policy to the empirical distribution~\citep{levine2020offline,agarwal2020optimistic,kumar2020conservative,fujimoto2021minimalist}.

Episodes run for 75 steps.
Rewards are dense, but structured similarly to a sparse reward, with a base reward of -1 everywhere except the goal and a reward of 0 at the goal.
Additional small rewards are given if the agent keeps the hook near the table (this was required to obtain natural movements from the trained expert agent).

In this environment, \textit{we did not have the ground truth causal graph}, and so a heuristic was used.
The heuristic (wrongly) assumes that the agent/hook \textit{always} causes each of the next object position (hook and objects are always entangled), even though this is only true when the hook and the objects are touching.
The heuristic considers the two boxes to be separate whenever they are further than 5cm from each other.
Here is the implementation of the heuristic:

{\scriptsize
\begin{verbatim}
  def MaskHookSweep2(input_tensor):
  
    # base local mask for when boxes are far apart
    mask = torch.tensor(
      [[1, 1, 1],
      [1, 1, 0],
      [1, 0, 1],
      [1, 1, 1]]
      ).to(input_tensor.device)
    mask = mask[None].repeat((input_tensor.shape[0], 1, 1))
    
    # change local mask when boxes are close to each other
    mask[torch.sum(torch.abs(input_tensor[:,O1X:O1X+2] -\ 
          input_tensor[:,O2X:O2X+2]), axis=1) < 0.05] = 1
    
    return mask
\end{verbatim}
}

where the state-action components are (gripper, box1, box2, action).
This heuristic returns the following two causal graphs (note that goals are not part of the dynamics, and are separately labeled using random goal samples from the environment):

\begin{figure}[!h]
	\centering
	\includegraphics[width=0.4\textwidth]{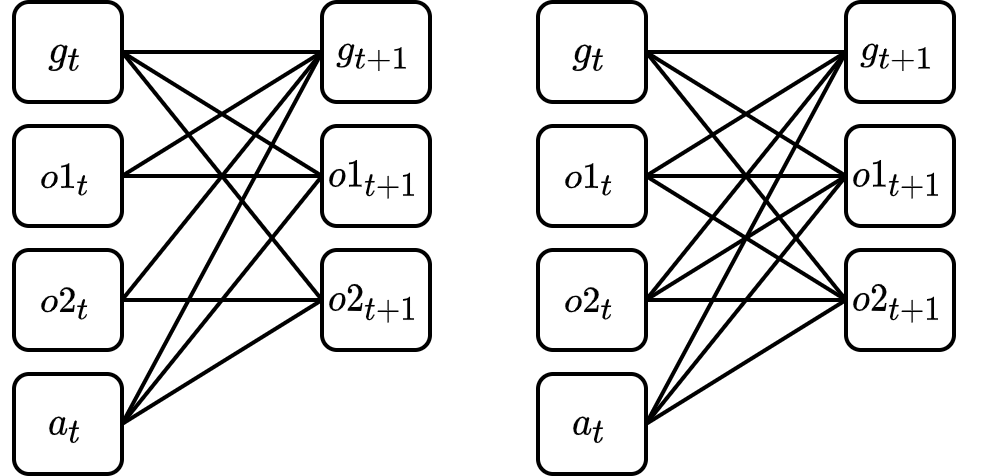}
\end{figure}

The parent sets are (g, o1, a) and (g, o2, a) for the first graph, and (g, o1, o2, a) in the second graph. 

For \env{HookSweep2}, the generation of the \ad{Mocoda} distribution is identical to how it was generated in \env{2d Navigation} (see previous subsection).
To obtain \ad{Mocoda-P}, we implemented rejection sampling as follows:

{\scriptsize
\begin{verbatim}
  def prune_to_uniform2(proposals, target_size=12000., smaller=True):
    proposals = proposals[np.linalg.norm(proposals[:,O1X:O1X+2] - proposals[:,O2X:O2X+2], axis=-1) < 0.3]
    sample = proposals[-5000:]
    
    fmap = lambda s: s[:,[O1X,O1X+1,O2X,O2X+1]]
    K = KernelDensity(bandwidth=0.05)
    K.fit(fmap(sample))
    scores = K.score_samples(fmap(proposals))
    scores = np.maximum(scores, np.log(0.05))
    scores = (1. / np.exp(scores))
    if np.minimum(scores, 1).sum() > 10000:
    while np.minimum(scores, 1).sum() > 10000:
      scores = scores * 0.99
    else:
    while np.minimum(scores, 1).sum() < 10000:
      scores = scores / 0.99
    
    return proposals[np.random.uniform(size=scores.shape) < scores]   
\end{verbatim}
}

The key difference to the \env{2d Navigation} is the definition of the \texttt{fmap} function, which defines the feature map under which the density is computed for rejection sampling.

The dynamics models for \env{HookSweep2} each had 2 layers of 512 neurons and were trained with a batch size of 512 and learning rate of 2e-4.
Hyperparamters were not tuned once a working setting was found (learning rate was increased to make training slightly faster).
Of 1M empirical samples, 5000 were used for validation.
The models were trained for 4000 epochs, where each epoch involved 40K random samples, with early stopping used in the last 50 epochs to find a locally optimal stopping point.

Augmented datasets of 5M samples were generated.
In each case except \ad{Emp}, 1M were the original empirical dataset (thus 4M new samples were generated).
In the case of \ad{Emp}, the 1M original samplers were simply repeated 5 times to get the full augmented dataset.
The locally factored network was used to generate the augmented datasets. 

These augmented distributions were then used to train the downstream RL agents.
The agent algorithms were the same as for \env{2d Navigation}, except that they used 3 layers of 512 neurons in both actor and critic networks.
The agents were trained for 1M steps with batch size 500 (for a total of 100 passes over the dataset). 

\subsection{Licenses and Compute}

All experiments were run on a modern desktop CPU and a NVIDIA GTX 1080 Ti GPU. 

Code and assets are available under Apache and MIT licenses from Mujoco, OpenAI Gym, AC-Teach~\cite{kurenkov2020ac}, and Modular RL~\cite{mrl} repositories.
The implementations used in this paper will be released upon acceptance under an open source license. 

\section{Further Discussion of Broader Impacts}\label{broader_impacts}

\methodName uses causally-motivated data augmentation to tackle sequential decision making problems where the available experience data may not be sufficient to find an optimal policy for the task at hand.
While we have thusfar applied this approach to continuous control problems, there are a large body of problems that share this general motivation, where long-term fairness and robustness may be a central concern \citep{d2020fairness}.
In these cases, the causal assumptions used to implement \methodName deserve extra care and external scrutiny.
For such problems, the structure of the state space may include sensitive and/or socially-ascribed attributes of groups and individuals (which cannot be directly intervened upon), so any graphical causal will involve normative assumptions about the environment in which the agent is embedded \citep{hu2020s}.

\section{\methodName sampling pseudocode}

%
\algnewcommand{\LeftComment}[1]{\Statex \(\triangleright\) #1}
\newcommand{\Pai}{\text{Pa}(i)}
\newcommand{\already}{\mathcal{AS}}
\begin{algorithm}\footnotesize
  \centering
  \begin{minipage}{1.\linewidth}
	\caption{\methodName for FMDPs, assuming hand-specified dynamics factorization}
  \label{algo:mocoda-fmdp}
	\begin{algorithmic}[1]
	\Function{GenerateMocodaData}{N}:
	\State \textbf{Input:} observed transition dataset $(s,a,s') \in \mathcal{D}$
	\State \textbf{Input:} causal structure of transition dynamics $\mathcal{G} := \{(i, \Pai) \ \forall \ i \in N_s\}$
	a.k.a. ``parent sets''
	\State \hphantom{\textbf{Input:}} NOTE: $\text{Pa}(i)$ is shorthand for $\{j : s_j \in \text{CausalParent}(s_i')\}$
	\State \hphantom{\textbf{Input:}} NOTE: $\text{Pa}(i) \subset [N_s + N_a]$ can index into states or actions
	\vspace{0.5\baselineskip}
	\State $\mathcal{D}_{tr}, \mathcal{D}_{va} = \texttt{train\_val\_split}(\mathcal{D})$
	\Comment{Split data}
  \State $\theta := \textsc{TrainGMMParentsModel}(\mathcal{D}_{tr}, \mathcal{D}_{va}, \mathcal{G})$
	\Comment{train GMM parent distribution $P_\theta(s,a)$}
  
  \State $\phi := \textsc{TrainFactoredDynamics}(\mathcal{D}_{tr}, \mathcal{D}_{va}, \mathcal{G})$
	\Comment{train factored dynamics model $P_\phi(s'|s,a)$}
	
  \State \textbf{return} \textsc{SampleAugmentedDataset}(N, $\theta$, $\phi$, $\mathcal{G}$)
  \EndFunction
  
  \vspace{\baselineskip}
  \Function{TrainGMMParentsModel}{$\mathcal{D}_{tr}, \mathcal{D}_{va}, \mathcal{G}$}:
    \For{ $(i, \Pai) \in \mathcal{G}:$ } \Comment iterate over parent sets for each child
      \State $\{\mu^k_{\Pai},\Sigma^k_{\Pai},\gamma^k_{\Pai}\} = \texttt{init\_gmm\_params}(N_k)$
      \Comment{NOTE: \emph{only} for this parent set}
      \State $\mathcal{D}^{tr}_{\Pai}(s,a) := \{(s[\Pai], a[\Pai \% N_s]) \forall (s, a, s') \in \mathcal{D}^{tr}\}$
      \State $\mathcal{D}^{va}_{\Pai}(s,a) := \{(s[\Pai], a[\Pai \% N_s]) \forall (s, a, s') \in \mathcal{D}^{va}\}$ \Comment subsample relevant dims
      \While{$\texttt{not\_converged}(\text{GMM}(\cdot;\mu^k_{\Pai},\Sigma^k_{\Pai},\gamma^k_{\Pai}), \mathcal{D}^{va}_{\Pai}(s,a))$}
          \State $\{\mu^k_{\Pai},\Sigma^k_{\Pai},\gamma^k_{\Pai}\} \leftarrow \texttt{update\_gmm\_params}(\{\mu^k_{\Pai},\Sigma^k_{\Pai},\gamma^k_{\Pai}\}, \mathcal{D}^{tr}_{\Pai}(s,a))$
      \EndWhile
      \State $\theta \texttt{.append}(\{\mu^k_{\Pai},\Sigma^k_{\Pai},\gamma^k_{\Pai}\})$
  \EndFor
  \State \textbf{return} $\theta$ 
  \EndFunction
  
  \vspace{\baselineskip}
  \Function{TrainFactoredDynamics}{$\mathcal{D}_{tr}, \mathcal{D}_{va}, \mathcal{G}$}:
  \For{ $(i, \Pai) \in \mathcal{G}:$ } \Comment iterate over parent sets for each child
      \State $\phi_i = \texttt{init\_mlp\_params}()$ 
      \Comment{NOTE: \emph{only} for this parent set}
      \State $\mathcal{D}^{tr}_{\Pai}(s,a,s') := \{(s[\Pai], a[\Pai \% N_s], s'[i]) \forall (s, a, s') \in \mathcal{D}^{tr}\}$
      \State $\mathcal{D}^{va}_{\Pai}(s,a,s') := \{(s[\Pai], a[\Pai \% N_s], s'[i]) \forall (s, a, s') \in \mathcal{D}^{va}\}$ \Comment subsample relevant dims
      \While{$\texttt{not\_converged}(\text{MLP}(\cdot|\cdot;\phi_i), \mathcal{D}^{va}_{\Pai}(s,a,s'))$}
          \State $\phi_i \leftarrow \texttt{update\_mlp\_params}(\phi_i, \mathcal{D}^{tr}_{\Pai}(s,a,s'))$
      \EndWhile
  \EndFor
  \State \textbf{return} $\phi$ 
  \EndFunction
  
  \vspace{\baselineskip}
  \Function{SampleAugmentedData}{N, $\theta$, $\phi$, $\mathcal{G}$}
    \For{$\_ \in \texttt{range}(N)$:}
	    \LeftComment{\ \ \ sample parent data, i.e. $(\tilde s, \tilde a)$: sequentially sample the parent set GMMs, conditioning}
      \LeftComment{\ \ \ each GMM on previous samples to handle any overlap between parent sets}
      \State $\already := \{ \ \}$
      \Comment{define an ``already sampled'' set to track any parent set overlap}
      \State $\tilde s := [ \ ]$;
             $\tilde a := [ \ ]$
      \For{$i \in \texttt{range}(N_s)$:}
         \If{$\Pai \cap \already = \emptyset:$} \Comment{no vars in this parent set already sampled}
            \State $(\tilde s_{\Pai}, \tilde a_{\Pai}) \sim \text{GMM}(\cdot;\mu^k_{\Pai},\Sigma^k_{\Pai},\gamma^k_{\Pai})$
            \State $\tilde s \texttt{.extend}(\tilde s_{\Pai})$
            \State $\tilde a \texttt{.extend}(\tilde a_{\Pai})$
         \Else \Comment{some vars in this parent set already sampled and must be conditioned on}
	          \LeftComment{\ \ \ \ \ \ \ \ \ \ \ \ \ \ \ \ condition this GMM already-sampled vars, then sample remaining vars}
            \State $(\tilde s_{\Pai \backslash \already}, \tilde a_{\Pai \backslash \already}) \sim \text{GMM}(\cdot|\already;\mu^k_{\Pai},\Sigma^k_{\Pai},\gamma^k_{\Pai})$
	          \LeftComment{\ \ \ \ \ \ \ \ \ \ \ \ \ \ \ \ NOTE: this sampling is easily realized by conditioning each Gaussian component}
            \LeftComment{\ \ \ \ \ \ \ \ \ \ \ \ \ \ \ \ and updating mixture components in proportion to density of already-sampled vars}
            \State $\tilde s \texttt{.extend}(\tilde s_{\Pai \backslash \already})$
            \State $\tilde a \texttt{.extend}(\tilde a_{\Pai \backslash \already})$
         \EndIf
         \State $\already \leftarrow \already \cup \Pai$
      \EndFor
	    \LeftComment{\ \ \ sample next states, i.e. $\tilde s'|(\tilde s, \tilde a)$: sequentially sample each ``factor'' in the factorized dynamics}
      \State $\tilde s' := [ \ ]$
      \For{$i \in \texttt{range}(N_s)$:}
          \State $\tilde s_i' \sim \text{MLP}(\cdot|\tilde s, \tilde a;\phi_i)$
          \State $\tilde s' \texttt{.append}(\tilde s_i')$
      \EndFor
      \LeftComment{\ \ \ assemble transition}
      \State $\tilde s  = \texttt{array}(\tilde s )$;
             $\tilde a  = \texttt{array}(\tilde a )$;
             $\tilde s' = \texttt{array}(\tilde s')$
      \State $\mathcal{\tilde D}\texttt{.append}((\tilde s, \tilde a, \tilde s'))$
  \EndFor
  \State \textbf{return} $\mathcal{\tilde D}$
  \EndFunction
	\end{algorithmic} 
  \end{minipage}
\end{algorithm}

Algorithm \ref{algo:mocoda-fmdp} shows the pseudocode for \methodName sampling for a FMDP, 
where the causal structure of the transition dynamics is assumed known.
For simplicity of exposition we describe the case where each ``factor'' in the 
factorized dynamics is modeled using an MLP, which corresponds to the ``Globally Factored''
model architecture referred to in Table \ref{tab:toy-mse}.
Realizing the ``Locally Factored'' architecture is simply a matter of replacing $\text{MLP}(\cdot)$
in the pseudocode with $(\text{MLP} \circ \text{MaskedComposer})(\cdot)$ described in Section \ref{sec:dynamics-models}.

Implementing \methodName sampling for a Local Causal Model rather than an FMDP is also a straightforward extension. The dynamics modeling is the same (but for the dynamics being conditioned on $\mathcal{L}$).
The parent sampling procedure is described in \ref{appdx_parentdist}.

\section{Relation to Causal Inference and Counterfactual Reasoning}
\label{sec:causal-appendix}

Although \methodName leverages a causal structure on the transition dynamics, and although it is possible that the models used by \methodName could be used for a limited form of causal inference (described below), the actual \methodName algorithm does not do Pearl-style ``counterfactual reasoning'' \cite{pearl2009causality} when sampling new transitions. This is because the generic algorithm presented in Algorithm \ref{algo:mocoda-fmdp} uses the parent model $P_\theta(s,a)$ and dynamics model $P_\phi(s'|s,a)$ together to sample entire transitions \textit{de novo}, whereas Pearl-style counterfactuals ask ``what if $X'$ happened instead of $X$ (given $Y$ was observed)?''. Answering this latter question in the SCM framework involves inference over exogenous noise variables, and typically results in a partial relabeling of the data with the counterfactual result. In \methodName, there are no explicit noise variables, and the noise is implicit in the parent and dynamics models.

Nevertheless, one could do a form of Pearl-style counterfactual reasoning using the \methodName models by considering counterfactual ``what if'' questions for a subset of all parent variables, rather than all variables simultaneously. In this case, the parent model could be used to conditionally resample any unspecified parent variables (conditioning the remaining variables on factual observations), and the dynamics model could be used to resample any affected causal mechanisms (keeping the rest of the transition, and therefore any noise implicit in the rest of the transition, fixed).

When carrying out causal inference or counterfactual reasoning, a central concern is \emph{identifiability}: under what assumptions are inferences or counterfactual samples produced by an algorithm said to be unique?
Demonstrating identifiability, say of counterfactual transitions, typically requires the introduction of further assumptions over the structural functions themselves, which can be realized in practice through the use of specialized network architectures (which we have not employed in our experiments).
For example, \citet{oberst2019counterfactual} extended the monotonicy assumption developed for binary outcomes \citep{pearl1999probabilities} to categorical transition dynamics, enabling identifiability analyses to discrete MDPs.
\citet{lu2020sample} applied SCM dynamics to data augmentation in continuous sample spaces, and discussed the conditions under which the generated transitions are uniquely identifiable counterfactual samples.
Without appealing to such approaches, we note that the discrete setting used for analysis in Section \ref{sec:theory} could be further constrained by assuming deterministic and invertible dynamics, which would yield identifiable counterfactual sampling using the \methodName models. 
However, these restrictive assumptions preclude most practical settings.

\end{document}